
\documentclass{article}

\usepackage{times}
\usepackage{epsfig}
\usepackage{graphicx}
\usepackage{amsmath}
\usepackage{amssymb}

\usepackage{microtype}
\usepackage{booktabs} 

\usepackage[utf8]{inputenc} 
\usepackage[T1]{fontenc}    
\usepackage{url}            
\usepackage{amsfonts}       
\usepackage{nicefrac}       
\usepackage[inline]{enumitem}

\usepackage{wrapfig, blindtext}
\usepackage{epstopdf}
\usepackage{booktabs} 
\usepackage{extarrows}

\usepackage{amsmath}   
\usepackage{amssymb}
\usepackage{wasysym}
\usepackage{multirow}

\usepackage{amsthm}
\usepackage{bbm}
\usepackage{bm}
\usepackage{enumerate}
\usepackage{footmisc}
\usepackage{algorithm}
\usepackage{algorithmic}
\usepackage{setspace}
\usepackage{caption}
\usepackage{makecell}

\usepackage{natbib}
\usepackage[utf8]{inputenc} 
\usepackage[T1]{fontenc}    
\usepackage{url}            
\usepackage{booktabs}       
\usepackage{graphicx}
\usepackage{wrapfig,lipsum,booktabs}
\usepackage{subcaption}
\usepackage{amsmath}
\usepackage{graphicx}

\usepackage{amsfonts}
\usepackage{mathtools}
\usepackage{framed}
\usepackage{microtype}

\usepackage{siunitx}
\usepackage{xr-hyper}
\usepackage{xr}
\usepackage{xr}
\usepackage{wrapfig}
\usepackage{amsmath,amsthm}
\usepackage{amssymb}
\usepackage{epstopdf}
\usepackage{mathtools}
\usepackage[dvipsnames]{xcolor}


\usepackage[pagebackref=true,breaklinks=true,colorlinks,bookmarks=false]{hyperref}

\definecolor{mygray}{gray}{0.5}
\newcommand{\one}[1]{\mathbbm{1}_{[#1]}}

\newcolumntype{K}[1]{>{\centering\arraybackslash}p{#1}}

\newcommand{\dacl}{DACL }
\newcommand{\daclp}{DACL+ }
\newcommand{\daclns}{DACL}

\usepackage{Definitions}
\newcommand{\cont}{{\normalfont{\text{ctr}}}} 
\newcommand{\gauss}{{\normalfont{\text{gauss}}}} 
\newcommand{\mix}{{\normalfont{\text{mix}}}} 
\newcommand{\simi}{{\normalfont{\text{sim}}}} 
\newcommand{\class}{{\normalfont{\text{cf}}}} 

\newcommand{\quotes}[1]{``#1''}

\usepackage{cleveref}

\usepackage[accepted]{icml2021}


\icmltitlerunning{Towards Domain-Agnostic Contrastive Learning}

\begin{document}

\twocolumn[
\icmltitle{Towards Domain-Agnostic Contrastive Learning}




\begin{icmlauthorlist}
\icmlauthor{Vikas Verma}{goo,aalto}
\icmlauthor{Minh-Thang Luong}{goo}
\icmlauthor{Kenji Kawaguchi}{har}
\icmlauthor{Hieu Pham}{goo}
\icmlauthor{Quoc V. Le}{goo}

\end{icmlauthorlist}

\icmlaffiliation{aalto}{Aalto University, Finland.}
\icmlaffiliation{goo}{Google Research, Brain Team.}
\icmlaffiliation{har}{Harvard University}

\icmlcorrespondingauthor{Vikas Verma}{vikas.verma@aalto.fi}
\icmlcorrespondingauthor{Minh-Thang Luong}{thangluong@google.com}
\icmlcorrespondingauthor{Kenji Kawaguchi}{kkawaguchi@fas.harvard.edu}
\icmlcorrespondingauthor{Hieu Pham}{hyhieu@google.com}
\icmlcorrespondingauthor{Quoc V. Le}{qvl@google.com}

\icmlkeywords{Machine Learning, ICML}

\vskip 0.3in
]



\printAffiliationsAndNotice{\icmlEqualContribution} 

\begin{abstract}
Despite recent successes, most contrastive self-supervised learning methods  are domain-specific, relying heavily on data augmentation techniques that require knowledge about a particular domain, such as image cropping and rotation.
To overcome such limitation, we propose a domain-agnostic approach to contrastive learning, named {\it DACL}, that is applicable to problems where 
domain-specific data augmentations are not readily available. 
Key to our approach is the use of {\it Mixup noise} to create similar and dissimilar examples by mixing data samples differently either at the input or hidden-state levels. We theoretically analyze our method and show advantages over the Gaussian-noise based contrastive learning approach.
To demonstrate the effectiveness of DACL, we conduct experiments across various domains such as tabular data, images, and graphs. 
Our results show that DACL not only outperforms other domain-agnostic noising methods, such as Gaussian-noise, but also combines well with domain-specific methods, such as SimCLR, to improve self-supervised visual representation learning.
\end{abstract}

\section{Introduction}

One of the core objectives of deep learning is to discover useful representations from the raw input signals without explicit labels provided by human annotators. Recently, self-supervised  learning methods have emerged as one of the most promising classes of methods to accomplish this objective with strong performances across various domains such as computer vision ~\citep{Oord2018RepresentationLW,He2019MomentumCF,Chen2020ASF, byol}, natural language processing ~\citep{dai2015semi,howard2018universal,peters2018deep,radford2019language, Clark2020ELECTRA}, and speech recognition ~\citep{schneider2019wav2vec,baevski2020wav2vec}. These self-supervised methods learn useful representations without explicit annotations by reformulating the unsupervised representation learning problem into a supervised learning problem. This reformulation is done by defining a pretext task. The pretext tasks defined in these methods are based on certain domain-specific \textit{regularities} and would generally differ from domain to domain (more discussion about this is in the related work, Section \ref{sec:related_work}). 

%

\begin{figure}[ht]
\begin{center}
\includegraphics[width=0.85\linewidth]{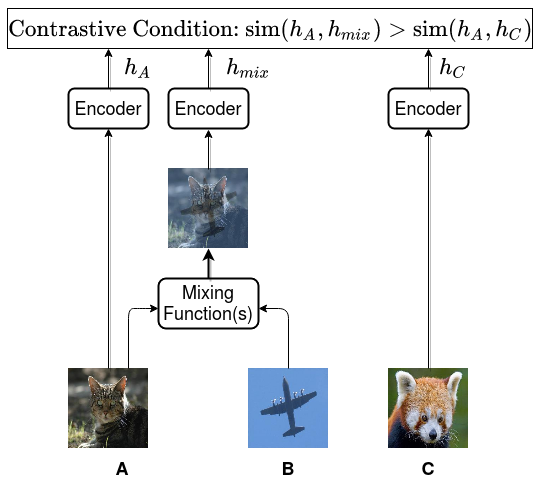}
\end{center}
\vspace{-0.4cm}
\caption{For a given sample A, we create a positive sample by mixing it with another random sample B. The mixing function can be either of the form of Equation \ref{eq:mcl} (Linear-Mixup), \ref{eq:mix_geo} (Geometric-Mixup) or \ref{eq:mix_fm} (Binary-Mixup), and the mixing coefficient is chosen in such a way that the mixed sample is closer to A than B. Using another randomly chosen sample C, the contrastive learning formulation tries to satisfy the condition $\simi(\bm{h_A}, \bm{h}_{mix}) > \simi(\bm{h_A}, \bm{h}_C)$, where $\simi$ is a measure of similarity between two vectors.}
\vspace{-0.4cm}
\label{fig:dacl}
\end{figure}

Among various pretext tasks defined for self-supervised learning, contrastive learning, e.g.  \cite{chopra,hadsell,Oord2018RepresentationLW, cpc_v2, He2019MomentumCF,Chen2020ASF,NEURIPS2020_4c2e5eaa,NEURIPS2020_9523147e, wang2020hypersphere}, is perhaps the most popular approach that learns to distinguish semantically similar examples over dissimilar ones. Despite its general applicability, contrastive learning requires a way, often by means of data augmentations, to create semantically similar and dissimilar examples in the domain of interest for it to work. For example, in computer vision, semantically similar samples can be constructed using semantic-preserving augmentation techniques such as flipping, rotating, jittering, and cropping.
These semantic-preserving augmentations, however, require domain-specific knowledge and may not be readily available for other modalities such as graph or tabular data.

How to create semantically similar and dissimilar samples for new domains remains an open problem. 
As a simplest solution, one may add a sufficiently small random noise (such as Gaussian-noise) to a given sample to construct examples that are similar to it.
Although simple, such augmentation strategies do not exploit the underlying structure of the data manifold. 
In this work, we propose DACL, which stands for \underline{D}omain-\underline{A}gnostic \underline{C}ontrastive \underline{L}earning, an approach that utilizes 
\textit{Mixup-noise} to create similar and dissimilar examples by mixing data samples differently either at the input or hidden-state levels. A simple diagrammatic depiction of how to apply \dacl in the input space is given in Figure \ref{fig:dacl}. 
Our experiments demonstrate the effectiveness of DACL across various domains, ranging from tabular data, to images and graphs; whereas, our theoretical analysis sheds light on why Mixup-noise works better than Gaussian-noise.

In summary, the contributions of this work are as follows:
\begin{itemize}
    \item We propose Mixup-noise as a way of constructing positive and negative samples for  contrastive learning and conduct theoretical analysis to show that Mixup-noise has better generalization bounds than Gaussian-noise.
    \item We show that using other forms of data-dependent noise (geometric-mixup, binary-mixup) can further improve the performance of \daclns.
    
    \item We extend \dacl to domains where data has a non-fixed topology (for example, graphs) by applying Mixup-noise in the hidden states.
   
    
    \item We demonstrate that Mixup-noise based data augmentation is complementary to other image-specific augmentations for contrastive learning, resulting in improvements over SimCLR baseline for CIFAR10, CIFAR100 and ImageNet datasets.
\end{itemize}

\section{Contrastive Learning : Problem Definition}
Contrastive learning can be formally defined using the notions of \quotes{anchor}, \quotes{positive} and \quotes{negative} samples. Here, positive and negative samples refer to samples that are semantically similar and dissimilar to anchor samples. Suppose we have an encoding function $h: \bm{x} \mapsto \bm{h}$, an anchor sample $\bm{x}$ and its corresponding positive and negative samples, $\bm{x}^{+}$ and $\bm{x}^{-}$. The objective of contrastive learning is to bring the anchor and the positive sample closer in the embedding space than the anchor and the negative sample. Formally, contrastive learning seeks to satisfy the following condition, where $\simi$ is a measure of similarity between two vectors:
\begin{equation}
    \simi(\bm{h}, \bm{h}^{+}) > \simi(\bm{h}, \bm{h}^{-})
\end{equation}
While the above objective can be reformulated in various ways, including max-margin contrastive loss in \cite{hadsell}, triplet loss in \cite{triplet_loss}, and maximizing a metric of local aggregation \citep{Zhuang2019LocalAF},  in this work we consider InfoNCE loss because of its adaptation in multiple current state-of-the-art methods \citep{sohn,Oord2018RepresentationLW,He2019MomentumCF,Chen2020ASF,Wu_2018_CVPR}. Let us suppose that $\{\bm{x}_k\}_{k=1}^N$ is a set of $N$ samples such that it consists of a sample $\bm{x}_i$ which is semantically similar to $\bm{x}_j$ and dissimilar to all the other samples in the set. Then the InfoNCE tries to maximize the similarity between the positive pair and minimize the similarity between the negative pairs, and is defined as:
\begin{equation}
\label{eq:loss}
    \ell_{i,j} = -\log \frac{\exp(\mathrm{sim}(\bm{h}_i,  \bm{h}_j))}{\sum_{k=1}^{N} \one{k \neq i}\exp(\mathrm{sim}(\bm{h}_i, \bm{h}_k))}
\end{equation}


\section{Domain-Agnostic Contrastive Learning with Mixup}
\label{mcl}

For domains where natural data augmentation methods are not available, we propose to apply Mixup \citep{mixup} based data interpolation for creating positive and negative samples. Given a data distribution $\mathcal{D}=\{\bm{x}_k\}_{k=1}^K$, a positive sample for an anchor $\bm{x}$ is created by taking its random interpolation with another randomly chosen sample $\tilde{ \bm{x}}$ from $\mathcal{D}$:
\begin{equation}
\label{eq:mcl}
    \bm{x}^{+} = \lambda \bm{x}+ (1-\lambda) \tilde{\bm{x}} 
\end{equation}
where $\lambda$ is a coefficient sampled from a random distribution such that $\bm{x}^{+}$ is closer to $\bm{x}$ than $\tilde{\bm{x}}$. For instance, we can sample $\lambda$ from a uniform distribution $\lambda \sim U(\alpha, 1.0)$ with high values of $\alpha$ such as 0.9. Similar to SimCLR \citep{Chen2020ASF}, positive samples corresponding to other anchor samples in the training batch are used as the negative samples for $\bm{x}$. 

Creating positive samples using Mixup in the input space (Eq. \ref{eq:mcl}) is not feasible in domains where data has a non-fixed topology, such as sequences, trees, and graphs. For such domains, we create positive samples by  mixing fixed-length hidden representations of samples \citep{manifold_mixup}. Formally, let us assume that there exists an encoder function $h: \mathcal{I} \mapsto \bm{h}$ that maps a sample  $\mathcal{I}$ from such domains to a representation $\bm{h}$ via an intermediate layer that has a fixed-length hidden representation $\bm{v}$, then  we create positive sample in the intermediate layer as:
\begin{equation}
\label{eq:mcl_hidden}
    \bm{v}^{+} = \lambda \bm{v}+ (1-\lambda) \tilde{\bm{v}} 
\end{equation}
The above Mixup based method for constructing positive samples can be interpreted as adding  noise to a given sample in the direction of another sample in the data distribution. We term this as Mixup-noise. One might ask how  Mixup-noise is a better choice for contrastive learning than other forms of noise?
The central hypothesis of our method is that a network is forced to learn better features if the noise captures the structure of the data manifold rather than being independent of it. Consider an image $\bm{x}$ and adding  Gaussian-noise to it for constructing the positive sample: $\bm{x}^+ = \bm{x}+ \bm{\delta}$, where 
$\bm{\delta} \sim \mathcal{N}(\bm{0},\bm{\sigma^2\bm{I}})$. In this case, to maximize the similarity between $\bm{x}$ and $\bm{x}^+$, the network can  learn just to take an average over the neighboring pixels to remove the noise, thus bypassing learning the semantic concepts in the image. Such kind of trivial feature transformation is not possible with Mixup-noise, and hence it enforces the network to learn better features. In addition to the aforementioned hypothesis, in Section \ref{sec::theory}, we formally conduct a theoretical analysis to understand the effect of using Gaussian-noise vs Mixup-noise in the contrastive learning framework.


For experiments, we closely follow the encoder and projection-head architecture, and the process for computing the "normalized and temperature-scaled InfoNCE loss" from SimCLR \citep{Chen2020ASF}. Our approach for Mixup-noise based \underline{D}omain-\underline{A}gnostic \underline{C}ontrastive \underline{L}earning (DACL) in the input space is  summarized in Algorithm \ref{alg:main}. Algorithm for \dacl in hidden representations can be easily derived from Algorithm \ref{alg:main} by applying mixing in Line 8 and 14 instead of line 7 and 13.

\begin{algorithm}[!t]
\caption{\label{alg:main} Mixup-noise Domain-Agnostic Contrastive Learning.}
\begin{algorithmic}[1]
    \STATE \textbf{input:} batch size $N$, temperature $\tau$, encoder function $h$, projection-head $g$, hyperparameter $\alpha$.
    \FOR{sampled minibatch $\{\bm x_k\}_{k=1}^N$}
    \STATE \textbf{for all} $k\in \{1, \ldots, N\}$ \textbf{do}
        \STATE $~~~~$\textcolor{gray}{\# Create first positive sample using Mixup Noise}
        \STATE $~~~~$$ \lambda_1 \sim U(\alpha,1.0)$
        \textcolor{gray}{~~~~~~~\# sample mixing coefficient}
        \STATE $~~~~$$ \bm{x} \sim \{\bm x_k\}_{k=1}^N - \{\bm{x_k}\}$
        \STATE $~~~~$$\tilde{\bm x}_{2k-1} = \lambda_1\bm{x_k} + (1-\lambda_1)\bm{x}$
        \STATE $~~~~$$\bm h_{2k-1} = h(\tilde{\bm x}_{2k-1})$  \textcolor{gray}{~~~~~~~~~~~~~~~~~~~~~\# apply encoder}
        \STATE $~~~~$$\bm z_{2k-1} = g({\bm h}_{2k-1})$  \textcolor{gray}{~~~~~~~~~~\# apply projection-head}
        \STATE $~~~~$\textcolor{gray}{\# Create second positive sample using Mixup Noise}
        \STATE $~~~~$$ \lambda_2 \sim U(\alpha,1.0)$
        \textcolor{gray}{~~~~~~~~~~~\# sample mixing coefficient}
        \STATE $~~~~$$ \bm{x} \sim \{\bm x_k\}_{k=1}^N - \{\bm{x_k}\}$
        \STATE $~~~~$$\tilde{\bm x}_{2k-1} = \lambda_2\bm{x_k} + (1-\lambda_2)\bm{x}$
        \STATE $~~~~$$\bm h_{2k} = h(\tilde{\bm x}_{2k})$      \textcolor{gray}{~~~~~~~~~~~~~~~~~~~~~~~~~~~~~~~~\# apply encoder}
        \STATE $~~~~$$\bm z_{2k} = g({\bm h}_{2k})$      \textcolor{gray}{~~~~~~~~~~~~~~~~~~~\# apply projection-head}
    \STATE \textbf{end for}
    \STATE \textbf{for all} $i\in\{1, \ldots, 2N\}$ and $j\in\{1, \dots, 2N\}$ \textbf{do}
    \STATE $~~~~$ $s_{i,j} = \bm z_i^\top \bm z_j / (\lVert\bm z_i\rVert \lVert\bm z_j\rVert)$ \textcolor{gray}{~~~~~~~~\# pairwise similarity}\\
    \STATE \textbf{end for}
    \STATE \textbf{define}  $\ell(i, j) \!=\! -\log \frac{\exp(s_{i,j}/\tau)}{\sum_{k=1}^{2N} \one{k \neq i}\exp(s_{i, k}/\tau)}$ \\ \STATE $\mathcal{L} = \frac{1}{2N} \sum_{k=1}^N \left[ \ell(2k\!-\!1, 2k) + \ell(2k, 2k\!-\!1)\right]$
    \STATE update networks $h$ and $g$ to minimize $\mathcal{L}$
    \ENDFOR
    \STATE \textbf{return} encoder function $h(\cdot)$, and projection-head $g(\cdot)$
\end{algorithmic}
\end{algorithm}

\subsection{Additional Forms of Mixup-Based Noise}
\label{mcl+}
We have thus far proposed the contrastive learning method using the linear-interpolation Mixup. Other forms of Mixup-noise can also be used to obtain more diverse samples for contrastive learning. In particular, we explore ``Geometric-Mixup'' and ``Binary-Mixup'' based noise. In Geometric-Mixup, we create a positive sample corresponding to a sample $\bm{x}$ by taking its weighted-geometric mean with another randomly chosen sample $\tilde{ \bm{x}}$:
\begin{equation}
\label{eq:mix_geo}
    \bm{x}^{+} =  \bm{x}^{\lambda}\odot \tilde{\bm{x}}^{(1-\lambda)} 
\end{equation}
Similar to Linear-Mixup in Eq.\ref{eq:mcl} , $\lambda$ is sampled from a uniform distribution $\lambda \sim U(\beta, 1.0)$ with high values of $\beta$.

In Binary-Mixup \citep{amr}, the elements of $\bm{x}$ are swapped with the elements of another randomly chosen sample $\tilde{\bm{x}}$. This is implemented by sampling a binary mask $\mathbf{m} \in \{0,1\}^{k}$ (where $k$ denotes the number of input features) and performing the following operation:
\begin{equation}
\label{eq:mix_fm}
    \bm{x}^{+} =  \bm{x}\odot\mathbf{m} +  \tilde{\bm{x}}\odot(1-\mathbf{m})
\end{equation}
where elements of  $\mathbf{m}$ are sampled from a $\text{Bernoulli}(\rho)$ distribution with high $\rho$ parameter.

We extend the \dacl procedure with the aforementioned additional Mixup-noise functions as follows. For a given sample $\bm{x}$, we randomly select a noise function from Linear-Mixup, Geometric-Mixup, and Binary-Mixup, and apply this function to create both of the positive samples corresponding to $\bm{x}$ (line 7 and 13 in Algorithm \ref{alg:main}). The rest of the details are the same as Algorithm \ref{alg:main}. We refer to this procedure as \daclp in the following experiments.

\section{Theoretical Analysis}
\label{sec::theory}
In this section, we mathematically analyze and compare the  properties of  Mixup-noise and Gaussian-noise based contrastive learning for a binary classification task. We first prove that for both Mixup-noise and Gaussian-noise, optimizing hidden layers with  a contrastive loss is related to minimizing classification loss with the last layer being optimized using labeled data. We then prove that the proposed method with Mixup-noise induces a different regularization effect on the classification loss when compared with that of Gaussian-noise. The difference in  regularization effects shows the advantage of Mixup-noise over Gaussian-noise when the data manifold lies in a low dimensional subspace. Intuitively, our theoretical results show that contrastive learning with Mixup-noise has implicit data-adaptive regularization effects that promote generalization.

To compare the cases of Mixup-noise and Gaussian-noise, we focus on  linear-interpolation based Mixup-noise and unify the two cases using the following observation. For Mixup-noise,
we can write
$
\xb^{+}_\mix = \lambda \xb+ (1-\lambda) \tilde \xb =\xb+\alpha\delta(\xb,\tilde \xb)  
$
with $\alpha= 1-\lambda>0$ and  $\delta(\xb,\tilde \xb)= (\tilde \xb -\xb)$ where $\tilde \xb$ is drawn from some (empirical) input data distribution. For Gaussian-noise,
we can write $
\xb^{+}_\gauss = \xb + \alpha\delta(\xb,\tilde \xb)
$ 
with $\alpha>0$  and $\delta(\xb, \tilde \xb)=  \tilde \xb$ where $  \tilde \xb$ is drawn from some Gaussian distribution. Accordingly, for each  input $\xb$, we can write  the positive example pair $(\xb^{+}_{}, \xb^{++})$ and the negative example $\xb^{-}$ for both cases as:   $\xb^{+} = \xb + \alpha\delta(\xb,\tilde \xb)$, $\xb^{++} = \xb + \alpha'\delta(\xb,\tilde \xb')$, and $\xb^{-} =\bar \xb + \alpha''\delta(\bar \xb,\tilde \xb'')$, where $\bar \xb $ is another input sample. Using this unified notation, we theoretically analyze our method with the standard contrastive loss $\ell_\cont$ defined by
$
\ell_\cont(\xb^{+}_{}, \xb^{++}, \xb^{-}) =- \log \frac{\exp( \simi[h(\xb^{+}_{}), h( \xb^{++}_{})])}{\exp(\simi[ h(\xb^{+}), h( \xb^{++}_{})])+\exp(\simi[h(\xb^{+}_{}), h(\xb^{-})] )},
$
where $h(\xb)\in \RR^d$ is the output of the last hidden layer and $\simi[q, q']=\frac{q\T q'}{\|q\| \|q'\|}$ for any given vectors $q$ and $q'$. This contrastive loss $\ell_\cont$ without the projection-head $g$ is commonly used in practice and captures the essence of   contrastive learning. Theoretical analyses of the benefit of the projection-head $g$ and other forms of Mixup-noise are left to future work. 

This section focuses on binary classification with $y \in \{0,1\}$  using the standard binary cross-entropy loss:
$
\ell_{\class}(q, y)= - y \log(\hat p_{q}(y=1)) - (1-y) \log( \hat p_{q}(y=0))
$
with  $\hat p_{q}(y=0)= 1- \hat p_{q}(y=1)$ where   $
\hat p_{q}(y=1)= \frac{1}{1+\exp(-q )}. 
$ We use $f(\xb)=h(\xb)\T w$ to represent the output of the classifier for some $w$; i.e., $\ell_{\class}(f(\xb), y)$ is the cross-entropy loss of the classifier $f$ on the sample $(\xb,y)$. Let  $\phi : \RR \rightarrow [0,1]$ be any Lipschitz function with constant $L_{\phi}$ such that  $\phi(q) \ge \one{q \le 0}$ for all $q\in \RR$; i.e., $\phi$ is an smoothed version of 0-1 loss. For example, we can  set  $\phi$ to be the hinge loss. Let $\Xcal \subseteq \RR^d$ and $\Ycal$ be the input and output spaces as $\xb \in \Xcal$ and $y \in \Ycal$.
Let  $c_{\xb}$ be a real number such that $c_{\xb}\ge (\xb_{k})^2$ for all $\xb\in \Xcal$ and $k \in\{1,\dots, d\}$.

As we aim to compare the cases of Mixup-noise and Gaussian-noise accurately (without taking loose bounds), we first prove an exact relationship between the contrastive loss and classification loss. That is, the following theorem shows that optimizing hidden layers with contrastive loss $\ell_\cont(\xb^{+},\xb^{++},\xb^{-})$ is related to minimising classification loss $\ell_{\class}\left(f(\xb^+), y \right)$ with the error term $\EE_y[\left(1-\bar \rho({y}) )\Ecal_{y}\right]$, where the error term increases as the probability of the negative example $x^{-}$ having the same label as that of the positive example $x^{+}$ increases:    
\begin{theorem} \label{thm:1}
Let $\Dcal$ be a  probability distribution over $(\xb,y)$ as $(\xb, y) \sim \Dcal$, with the corresponding marginal  distribution $\Dcal_x$ of $\xb$ and  conditional distribution $\Dcal_y$ of $\xb$ given a $y$. Let $\bar \rho(y)=\EE_{{(\xb',  y') \sim \Dcal}  }[\one{y' \neq y}]$ $(=\Pr(y' \neq y \mid y)>0)$. Then, for  any distribution pair $(\Dcal_{\tilde \xb}, \Dcal_{\alpha})$ and function $\delta$, the following holds: 
\begin{align*}
& \EE_{\hspace{-2pt}\substack{\xb, \bar \xb\sim \Dcal_{x}, \\ \tilde \xb, \tilde \xb',\tilde \xb'' \sim \Dcal_{\tilde x}, \\ \alpha, \alpha', \alpha''\sim \Dcal_{\alpha}}} [\ell_\cont(\xb^{+},\xb^{++},\xb^{-}) ]
\\ & =  \scalebox{0.96}{$\displaystyle   \EE_{\substack{(\xb, y) \sim \Dcal, \bar \xb\sim D_{\bar y}, \\ \tilde \xb, \tilde \xb',\tilde \xb'' \sim \Dcal_{\tilde x}, \\ \alpha,\alpha', \alpha''\sim \Dcal_{\alpha}}} \left[ \rho({y})\ell_{\class}\left(f(\xb^+)   , y \right) \right]+\EE_y[\left(1-\bar \rho({y}) )\Ecal_{y}\right]$}
\end{align*}
where 
$$
\scalebox{0.85}{$\displaystyle \Ecal_{y}=\EE_{\hspace{-5pt}\substack{\xb, \bar \xb\sim \Dcal_{y}, \\ \tilde \xb, \tilde \xb',\tilde \xb'' \sim \Dcal_{\tilde x}, \\ \alpha, \alpha', \alpha''\sim \Dcal_{\alpha}}}  \left[ \log\left(1+  e^{-\tfrac{h(\xb^+)\T }{\|h(\xb^+)\|} \left(\tfrac{h(\xb^{++})}{\|h(\xb^{++})\|}- \tfrac{h(\xb^-)}{\|h(\xb^-)\|} \right)} \right) \right] $},
$$ 
$f(\xb^+)= h(\xb^{+} )\T  \tilde w$, $\bar y =1-y$,
$
\tilde w = \|h(\xb^{+} )\|^{-1} (\| \allowbreak h(  \pi_{y,1}(\allowbreak \xb^{++}, \xb^- ))\|^{-1} h(\pi_{y,1}(\xb^{++}, \xb^- )) -  \|h(\pi_{y,0}(\xb^{++},  \allowbreak \xb^- ))\|^{-1} h(\pi_{y,0}(\xb^{++}, \xb^- )))
$, and
$\pi_{y,y'}(\xb^{++}, \xb^- )=\one{y = y'} \xb^{++} +(1-\one{y = y'})\xb^- $.

\end{theorem}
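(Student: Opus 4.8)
The plan is to (i) rewrite $\ell_\cont$ as a one-dimensional logistic loss, (ii) recognize that this logistic loss equals \emph{exactly} the binary cross-entropy loss $\ell_\class(f(\xb^+),y)$ of the (data-dependent) classifier in the statement, for \emph{either} label $y\in\{0,1\}$, and (iii) split the expectation over the negative anchor $\bar\xb$ according to whether its label agrees with $y$, which produces the two terms on the right-hand side.

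For (i), write $a=\simi[h(\xb^+),h(\xb^{++})]$, $b=\simi[h(\xb^+),h(\xb^-)]$, and $g=a-b=\tfrac{h(\xb^+)\T}{\|h(\xb^+)\|}\bigl(\tfrac{h(\xb^{++})}{\|h(\xb^{++})\|}-\tfrac{h(\xb^-)}{\|h(\xb^-)\|}\bigr)$, so that
\[
\ell_\cont(\xb^+,\xb^{++},\xb^-)=-\log\tfrac{e^{a}}{e^{a}+e^{b}}=\log\bigl(1+e^{-g}\bigr),
\]
which is exactly the integrand defining $\Ecal_y$. For (ii), I would check the two cases. When $y=1$ the selectors satisfy $\pi_{y,1}(\xb^{++},\xb^-)=\xb^{++}$ and $\pi_{y,0}(\xb^{++},\xb^-)=\xb^-$, hence $f(\xb^+)=h(\xb^+)\T\tilde w=g$, and since the binary cross-entropy at label $1$ is $q\mapsto\log(1+e^{-q})$ we get $\ell_\class(f(\xb^+),1)=\log(1+e^{-g})=\ell_\cont$. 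When $y=0$ the selectors swap, so $f(\xb^+)=-g$, and since the binary cross-entropy at label $0$ is $q\mapsto\log(1+e^{q})$ we again get $\ell_\class(f(\xb^+),0)=\log(1+e^{-g})=\ell_\cont$. Thus the pointwise identity $\ell_\cont(\xb^+,\xb^{++},\xb^-)=\ell_\class(f(\xb^+),y)$ holds for both labels and for arbitrary draws of the auxiliary variables $\tilde\xb,\tilde\xb',\tilde\xb'',\alpha,\alpha',\alpha''$.

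For (iii), observe that the left-hand side never uses the label of $\xb$, so I may lift $\xb\sim\Dcal_x$ to $(\xb,y)\sim\Dcal$ and then substitute $\ell_\cont=\ell_\class(f(\xb^+),y)$ using (ii). Conditioning on all variables other than $\bar\xb$ and splitting the remaining expectation over $\bar\xb\sim\Dcal_x$ on the event ``$\bar\xb$ has label $\neq y$'': this event has probability $\bar\rho(y)$ and, as the task is binary, forces $\bar\xb\sim\Dcal_{\bar y}$ on it, while its complement has probability $1-\bar\rho(y)$ and forces $\bar\xb\sim\Dcal_y$. The first event contributes $\EE_{(\xb,y)\sim\Dcal,\ \bar\xb\sim\Dcal_{\bar y},\dots}\!\bigl[\bar\rho(y)\,\ell_\class(f(\xb^+),y)\bigr]$, the first term of the claim (I read the $\rho(y)$ printed there as $\bar\rho(y)$). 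On the complementary event I convert back $\ell_\class(f(\xb^+),y)=\ell_\cont(\xb^+,\xb^{++},\xb^-)$ and factor $\EE_{(\xb,y)\sim\Dcal}=\EE_{y}\EE_{\xb\sim\Dcal_y}$; together with $\bar\xb\sim\Dcal_y$ the inner expectation is then literally $\Ecal_y$, so this event contributes $\EE_y\bigl[(1-\bar\rho(y))\,\Ecal_y\bigr]$. Summing the two pieces yields the stated equality.

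I do not expect a real obstacle here: every step is an exact equality, and the error term $\Ecal_y$ arises from the case split itself rather than being bounded. The one thing to handle carefully is the bookkeeping in step (iii) — reconciling the marginal draw $\xb\sim\Dcal_x$ on the left with the class-conditional draw $\xb\sim\Dcal_y$ hidden inside $\Ecal_y$ (which the factorization $\EE_{(\xb,y)\sim\Dcal}=\EE_y\EE_{\xb\sim\Dcal_y}$ takes care of), and tracking which of the independent auxiliary variables feeds which of $\xb^+,\xb^{++},\xb^-$ so that conditioning on $\bar\xb$'s label leaves the others untouched.
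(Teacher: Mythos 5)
Your proof is correct and follows essentially the same route as the paper: rewrite $\ell_\cont$ as a logistic loss $\log(1+e^{-g})$, observe that this equals $\ell_\class(f(\xb^+),y)$ for the $y$-dependent classifier built from the $\pi$-selectors, and split the outer expectation according to whether the label of $\bar\xb$ agrees with $y$, which produces the $\bar\rho(y)$-weighted classification term and the $\Ecal_y$ error term. Your use of the selectors $\pi_{y,y'}$ to state a single pointwise identity $\ell_\cont = \ell_\class(f(\xb^+),y)$ for both labels is a mild streamlining of the paper's Lemmas 1--4 (the paper handles $y=0$ and $y=1$ by separate case analysis and a sign-flip lemma), and your note that the printed $\rho(y)$ should read $\bar\rho(y)$ matches what the paper's own proof actually derives.
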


All  the proofs are presented in Appendix \ref{app:proof}. Theorem \ref{thm:1} proves the exact relationship for \textit{training} loss when we set the distribution  $\Dcal$ to be an empirical distribution with Dirac measures on training data points: see Appendix \ref{app:discuss_theory} for more details. In general, Theorem \ref{thm:1} relates optimizing the  contrastive loss  $\ell_\cont(\xb^{+}, \xb^{++}, \xb^{-})$ to   minimizing the   classification loss $\ell_{\class}\left(f(\xb^+), y_{i} \right)$ at the perturbed sample $\xb^{+}$. The following theorem then shows that it is
approximately minimizing  the   classification loss $\ell_{\class}\left(f(\xb), y_{i} \right)$ at the original sample $\xb$ with additional regularization terms on $\nabla f(\xb)$:

\begin{theorem} \label{thm:2}
Let $\xb$ and  $w$ be vectors such that $\nabla f(\xb) $ and $\nabla^{2} f(\xb)$ exist. Assume that   $f(\xb)=\nabla f(\xb)\T \xb$, $\nabla^2 f(\xb)=0$, and   $\EE_{\tilde \xb\sim \Dcal_{\tilde x}} [\tilde \xb] =0$. Then, if $y f(\xb) +(y-1)f(\xb)\ge 0$, the following two statements hold for any $\Dcal_{\tilde x}$ and $\alpha>0$:
\vspace{-5pt}
\begin{description}[leftmargin=\parindent]
\item[(i)] 
\emph{(Mixup)} if $\delta(\xb,\tilde \xb)= \tilde \xb - \xb$, \vspace{-0pt}
\begin{align} \label{eq:thm2:1} 
&\EE_{\tilde \xb\sim \Dcal_{\tilde x}}[\ell_{\class}(f(\xb^+), y )] 
\\ \nonumber &= \ell_{\class}(f(\xb), y ) + c_{1}(\xb)| \| \nabla f(\xb)\|_{} +c_2(\xb)\| \nabla f(\xb)\|^{2} _{}
\\ \nonumber & \hspace{60pt} + c_3(\xb)\| \nabla f(\xb) \|_{\EE_{\tilde \xb\sim \Dcal_{\tilde x}}[ \tilde \xb \tilde \xb\T]}^2+ O(\alpha^3),  
\end{align}
\item[(ii)] \vspace{-5pt}
\emph{(Gaussian-noise)} if $\delta(\xb,\tilde \xb)=  \tilde \xb \sim \Ncal(0,\sigma^2 I)$, \vspace{-0pt}
\begin{align} \label{eq:thm2:2} 
& \EE_{\tilde \xb \sim \Ncal(0,\sigma^2 I)}[\ell_{\class}\left(f(\xb^+), y \right) ]
\\ \nonumber & =\ell_{\class}(f(\xb), y ) +\sigma^2 c_{3}(\xb) \| \nabla f(\xb)\|^{2}+ O(\alpha^3),
\end{align}
\end{description}
\vspace{-5pt}
where
$
c_1(\xb) = \alpha |\cos( \nabla f(\xb), \xb )|| y-\psi(f(\xb))|  \|\xb\|\ge 0
$, 
$
c_2(\xb)=\frac{\alpha^2 |\cos( \nabla f(\xb), \xb )|^{2}  \|\xb\|   }{2}     |\psi'(f(\xb))|\ge 0 
$, and 
$
c_3(\xb) =\frac{\alpha^2}{2}     |\psi'(f(\xb))|> 0$. Here, $\psi$ is the logic function as $
\psi(q)= \frac{\exp(q)}{1+\exp(q)}$ ($\psi'$ is its derivative),    $\cos(a, b)$ is the cosine similarity of two vectors $a$ and $b$, and 
  $\|v \|_{M}^2=v\T M v$ for any   positive semidefinite matrix $M$.\footnote{We use this notation for conciseness without assuming that it is a norm. If $M$ is only positive semidefinite instead of positive definite, $\|\cdot \|_{M}$ is not a norm since this does not satisfy the definition of the norm for positive definiteness; i.e., $\|v\|=0$ does not imply $v=0$. }
\end{theorem}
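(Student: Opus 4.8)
The plan is a two-stage Taylor expansion in the noise scale $\alpha$: I would first linearize the classifier $f$ around $\xb$ using the hypotheses on its derivatives, then expand the logistic loss $\ell_{\class}(\cdot,y)$ to second order around $f(\xb)$, take the expectation over $\tilde\xb$, and finally read the coefficients off as $c_1,c_2,c_3$. Cases (i) and (ii) are handled by one and the same computation, since they differ only in the choice of $\delta$.

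\textbf{Step 1 (reduce to a scalar perturbation) and Step 2 (expand the loss).} Because $\nabla^{2}f(\xb)=0$, a first-order expansion gives $f(\xb+\alpha\delta)=f(\xb)+\alpha\,\nabla f(\xb)\T\delta+O(\alpha^{3})$ (and is exact when $f$ is affine, which together with $f(\xb)=\nabla f(\xb)\T\xb$ even yields $f(z)=\nabla f(\xb)\T z$). Writing $u:=f(\xb^{+})-f(\xb)=\alpha\,\nabla f(\xb)\T\delta(\xb,\tilde\xb)+O(\alpha^{3})$, we have $u=O(\alpha)$ and $\EE|u|^{3}=O(\alpha^{3})$, using boundedness of $\Xcal$ (equivalently the constant $c_{\xb}$) in the Mixup case and finiteness of Gaussian moments in the Gaussian case. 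For the logistic loss one has $\partial_q\ell_{\class}(q,y)=\psi(q)-y$ and $\partial_q^{2}\ell_{\class}(q,y)=\psi'(q)$, with $\partial_q^{3}\ell_{\class}$ uniformly bounded; hence $\ell_{\class}(f(\xb)+u,y)=\ell_{\class}(f(\xb),y)+(\psi(f(\xb))-y)\,u+\tfrac12\psi'(f(\xb))\,u^{2}+R$, where $|\EE R|\le\tfrac16\|\partial_q^{3}\ell_{\class}\|_{\infty}\EE|u|^{3}=O(\alpha^{3})$.

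\textbf{Step 3 (take the expectation) and Step 4 (assemble and match).} Using $\EE_{\tilde\xb}[\tilde\xb]=0$: in the Mixup case $u=\alpha(\nabla f(\xb)\T\tilde\xb-f(\xb))$, so $\EE[u]=-\alpha f(\xb)$ and $\EE[u^{2}]=\alpha^{2}\big(\|\nabla f(\xb)\|_{\EE[\tilde\xb\tilde\xb\T]}^{2}+f(\xb)^{2}\big)$; in the Gaussian case $u=\alpha\,\nabla f(\xb)\T\tilde\xb$ with $\tilde\xb\sim\Ncal(0,\sigma^{2}I)$, so $\EE[u]=0$ and $\EE[u^{2}]=\alpha^{2}\sigma^{2}\|\nabla f(\xb)\|^{2}$. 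Substituting and using $f(\xb)=\nabla f(\xb)\T\xb=\|\nabla f(\xb)\|\,\|\xb\|\cos(\nabla f(\xb),\xb)$: in the Mixup case the linear term $-\alpha f(\xb)(\psi(f(\xb))-y)$ becomes $\alpha|f(\xb)|\,|y-\psi(f(\xb))|=c_{1}(\xb)\|\nabla f(\xb)\|$ once the hypothesis $yf(\xb)+(y-1)f(\xb)\ge0$, i.e. $(2y-1)f(\xb)\ge0$, is used to conclude that $f(\xb)$ and $y-\psi(f(\xb))$ share a sign; and since $\psi'>0$, the quadratic term $\tfrac12\psi'(f(\xb))\EE[u^{2}]$ splits into $\tfrac{\alpha^{2}}{2}\psi'(f(\xb))f(\xb)^{2}=c_{2}(\xb)\|\nabla f(\xb)\|^{2}$ and $\tfrac{\alpha^{2}}{2}\psi'(f(\xb))\|\nabla f(\xb)\|_{\EE[\tilde\xb\tilde\xb\T]}^{2}=c_{3}(\xb)\|\nabla f(\xb)\|_{\EE[\tilde\xb\tilde\xb\T]}^{2}$, giving \eqref{eq:thm2:1}. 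In the Gaussian case the linear term vanishes and the quadratic term is $\sigma^{2}c_{3}(\xb)\|\nabla f(\xb)\|^{2}$, giving \eqref{eq:thm2:2}.

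\textbf{Main obstacle.} The algebra is routine; the two points that need care are controlling the cubic Taylor remainder uniformly (this is where boundedness of $\Xcal$ and the explicit Gaussian third moment enter) and the sign bookkeeping in Step 4 that rewrites the signed quantity $-f(\xb)(\psi(f(\xb))-y)$ in the absolute-value form defining $c_{1}$ — precisely the place where $(2y-1)f(\xb)\ge0$ is invoked, and the only place it is needed.
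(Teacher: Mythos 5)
Your argument is correct and is essentially the paper's proof: the paper's Lemmas~\ref{lemma:6}--\ref{lemma:8} (and~\ref{lamma:9}) do the same second-order Taylor expansion in $\alpha$, only organized as a single expansion of the composite $\ell_{f,y}(\xb+\alpha\delta)$ via the chain rule rather than your two-stage "linearize $f$, then expand the scalar loss in $u$" decomposition, and then perform the identical sign analysis of $(y-\psi(f(\xb)))f(\xb)$ under $(2y-1)f(\xb)\ge 0$ and the same split $\EE[u^2]=\alpha^2(f(\xb)^2+\|\nabla f(\xb)\|^2_{\EE[\tilde\xb\tilde\xb^{\top}]})$. Your handling of the remainder (bounding $\EE|u|^3$ via a uniform third-derivative bound and $c_\xb$) is in fact slightly more careful than the paper's Lemma~\ref{lemma:6}, which only invokes twice-differentiability and thus really yields an $o(\alpha^2)$ remainder rather than the stated $O(\alpha^3)$.
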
  

The assumptions of $f(\xb)=\nabla f(\xb)\T \xb$ and $\nabla^2 f(\xb)=0$ in Theorem \ref{thm:2} are satisfied by  feedforward deep neural networks with ReLU and max pooling (without skip connections) as well as by linear models.
The condition of $y f(\xb) +(y-1)f(\xb)\ge 0$ is satisfied whenever the  training sample $(\xb, y)$ is classified correctly. In other words, Theorem \ref{thm:2} states that when the model  classifies a training sample  $(\xb, y)$ correctly, a training algorithm  implicitly minimizes the additional regularization terms for the sample $(\xb, y)$, which partially explains the benefit of training after correct classification of training samples.

In Eq. \eqref{eq:thm2:1}--\eqref{eq:thm2:2}, we can see that both  the Mixup-noise  and Gaussian-noise versions have different regularization effects on $\|\nabla f(\xb)\|$ --- the Euclidean norm of the gradient of the model $f$ with respect to input $\xb$. 
In the case of  the linear model, we  know from previous work that the regularization on $\|\nabla f(\xb)\|= \|w\|$ indeed promotes generalization: 

\begin{remark} \label{thm:3} 
\emph{(\citealp{bartlett2002rademacher})} Let $\Fcal_b = \{\xb\mapsto w\T \xb : \|w\|^2\le b\}$. Then,  for any $\delta>0$, with probability at least $1-\delta$ over an i.i.d. draw of $n$  examples $((\xb_i,y_i))_{i=1}^n$, the following holds for all $f \in \Fcal_b$: \vspace{-8pt}
\begin{align} \label{eq:thm:3}
\nonumber  & \EE_{(\xb,y)}[\one{(2y-1) \neq \sgn(f(\xb)) }]
 -\frac{1}{n} \sum_{i=1}^n \phi((2y_{i}-1)f(\xb_{i}))
 \\  & \le  4L_{\phi}\sqrt{\frac{ bc_{\xb} d}{n}}+  \sqrt{\frac{\ln(2/\delta)}{2n}}.
\end{align}
\end{remark}
By comparing Eq. \eqref{eq:thm2:1}--\eqref{eq:thm2:2} and by setting $\Dcal_{\tilde x}$ to be the  input data distribution, we can see that the Mixup-noise version has additional regularization effect on $\|\nabla f(\xb) \|_{\Sigma_X}^2=\|w\|_{\Sigma_X}^2$, while the Gaussian-noise version does not, where $\Sigma_X=\EE_x[\xb \xb\T]$ is the input covariance matrix. The following theorem shows that this implicit   regularization with the Mixup-noise version can further reduce the generalization error:

\begin{theorem} \label{thm:4}
Let $
\Fcal_b^{(\mix)} = \{\xb\mapsto w\T \xb : \|w\|_{\Sigma_X}^2 \le b\}
$. Then,  for any $\delta>0$, with probability at least $1-\delta$ over an iid draw of $n$  examples $((\xb_i,y_i))_{i=1}^n$, the following holds for all $f \in \Fcal_b^{(\mix)}$:
\begin{align} \label{eq:thm:4}
\nonumber  & \EE_{(\xb,y)}[\one{(2y-1) \neq \sgn(f(\xb)) }]
 -\frac{1}{n} \sum_{i=1}^n \phi((2y_{i}-1)f(\xb_{i})) 
 \\ & \le  4L_{\phi}\sqrt{\frac{ b \rank(\Sigma_X)}{n}}+  \sqrt{\frac{\ln(2/\delta)}{2n}}.
\end{align}
\end{theorem}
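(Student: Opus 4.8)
\emph{Plan of proof.} The bound in~\eqref{eq:thm:4} has exactly the shape of Remark~\ref{thm:3}, with the Euclidean ball $\{\|w\|^2\le b\}$ replaced by the ellipsoid $\{\|w\|_{\Sigma_X}^2\le b\}$ and the factor $c_{\xb}d$ replaced by $\rank(\Sigma_X)$, so I would reuse the same three-step pipeline that proves Remark~\ref{thm:3} (the Rademacher bound of \citealp{bartlett2002rademacher}): (i) since $\phi(q)\ge\one{q\le 0}$, the true $0$--$1$ risk is at most the true $\phi$-risk, which by the usual symmetrization/McDiarmid argument is bounded by the empirical $\phi$-risk plus a term proportional to the Rademacher complexity of $\phi\circ\Fcal_b^{(\mix)}$ together with a $\sqrt{\ln(2/\delta)/(2n)}$ deviation term; (ii) a Lipschitz-contraction step strips off $\phi$ at the price of the factor $L_{\phi}$; (iii) one bounds the Rademacher complexity of the \emph{linear} class $\Fcal_b^{(\mix)}$. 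Steps (i)--(ii) are verbatim those used for $\Fcal_b$ and produce the same constants $4L_{\phi}$ and $\sqrt{\ln(2/\delta)/(2n)}$, so the only genuinely new ingredient is step (iii).

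For step (iii) I would exploit that the inputs live in $\mathrm{range}(\Sigma_X)$: if $v\in\ker(\Sigma_X)$ then $\EE_x[(v\T\xb)^2]=v\T\Sigma_X v=0$, hence $v\T\xb=0$ almost surely, and decomposing $w=w_0+w_\perp$ with $w_0\in\mathrm{range}(\Sigma_X)$ and $w_\perp\in\ker(\Sigma_X)$ gives $w\T\xb=w_0\T\xb$ a.s.\ and $\|w\|_{\Sigma_X}^2=\|w_0\|_{\Sigma_X}^2$. Let $\Sigma_X^{1/2}$ be the symmetric square root, $\Sigma_X^{+/2}$ its pseudo-inverse, and $\xb':=\Sigma_X^{+/2}\xb$ the whitened input; since $\Sigma_X^{+/2}\Sigma_X^{1/2}$ is the orthogonal projector onto $\mathrm{range}(\Sigma_X)$ and thus fixes $w_0$, one gets $w\T\xb=(\Sigma_X^{1/2}w_0)\T\xb'$ a.s.\ with $\|\Sigma_X^{1/2}w_0\|^2=w_0\T\Sigma_X w_0=\|w\|_{\Sigma_X}^2\le b$. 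Hence, on the sample, every $f\in\Fcal_b^{(\mix)}$ is reproduced by some $u$ in the Euclidean ball $\{\|u\|^2\le b\}$ acting on the whitened data, so the Rademacher complexity of $\Fcal_b^{(\mix)}$ is at most that of $\{\xb'\mapsto u\T\xb':\|u\|^2\le b\}$, which in turn is at most $\sqrt{b}\,\bigl(\EE\|\xb'\|^2/n\bigr)^{1/2}$ by the standard chain $\EE_\sigma\|\sum_i\sigma_i\xb_i'\|\le\bigl(\sum_i\|\xb_i'\|^2\bigr)^{1/2}$ followed by Jensen over the draw of the sample. Since $\EE\|\xb'\|^2=\EE[\xb\T\Sigma_X^{+}\xb]=\operatorname{tr}(\Sigma_X^{+}\Sigma_X)=\operatorname{tr}\bigl(\Pi_{\mathrm{range}(\Sigma_X)}\bigr)=\rank(\Sigma_X)$, the Rademacher complexity of $\Fcal_b^{(\mix)}$ is at most $\sqrt{b\,\rank(\Sigma_X)/n}$; substituting this into (i)--(ii) reproduces the right-hand side of~\eqref{eq:thm:4}.

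The only real obstacle I anticipate is the bookkeeping around a (possibly) rank-deficient $\Sigma_X$: one must be careful that the $\ker(\Sigma_X)$-component of $w$ never affects predictions on the data, and that $\Sigma_X^{+/2}\Sigma_X^{1/2}$ is the identity on $\mathrm{range}(\Sigma_X)$; and one should phrase everything via the population (expected) Rademacher complexity so that $\operatorname{tr}(\Sigma_X^{+}\Sigma_X)=\rank(\Sigma_X)$ is an exact identity rather than the sample-dependent $\operatorname{tr}(\Sigma_X^{+}\widehat{\Sigma}_X)$, which would require an extra concentration step. Everything else is an unchanged instantiation of the Bartlett--Mendelson argument already behind Remark~\ref{thm:3}, with $\rank(\Sigma_X)$ in place of the ambient count $c_{\xb}d$ --- precisely the refinement that makes the implicit $\|w\|_{\Sigma_X}^2$ regularizer of Theorem~\ref{thm:2} pay off when the data manifold occupies a low-dimensional subspace.
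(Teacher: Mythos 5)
Your proposal is correct and follows essentially the same route as the paper's proof: apply Bartlett--Mendelson, bound the Rademacher complexity of the linear class by passing to the whitened inputs $\Sigma_X^{\dagger/2}\xb_i$, use Cauchy--Schwarz and Jensen over the Rademacher signs and then over the sample, and finish with $\tr(\Sigma_X^{\dagger}\Sigma_X)=\rank(\Sigma_X)$. The paper carries out the same manipulations; your explicit bookkeeping for the rank-deficient case (decomposing $w$ into range and kernel components and noting that $\xb$ lies in the range of $\Sigma_X$ almost surely) is a clarification of what the paper's step $w\T\xb_i=(\Sigma_X^{1/2}w)\T\Sigma_X^{\dagger/2}\xb_i$ implicitly requires.
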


Comparing  Eq. \eqref{eq:thm:3}--\eqref{eq:thm:4}, we can see that the proposed method with Mixup-noise has the  advantage over the Gaussian-noise when the  input data distribution lies in low dimensional manifold  as $
\rank(\Sigma_X)<d$. In general, our theoretical results  show that the proposed method with  Mixup-noise induces the implicit regularization on $\| \nabla f(\xb) \|_{\Sigma_X}^2$, which can reduce the complexity of the model class of $f$ along the data manifold captured by the covariance $\Sigma_X$. See Appendix \ref{app:discuss_theory} for additional discussions on the interpretation of Theorems  \ref{thm:1} and \ref{thm:2} for  neural networks. 

The proofs of Theorems \ref{thm:1} and \ref{thm:2} hold true also when we set $\xb$ to be the output of a hidden layer and by redefining the domains of $h$ and $f$ to be  the output of the hidden layer. Therefore, by treating $\xb$ to be the output of a hidden layer, our theory also applies to the contrastive learning with positive samples created by mixing the hidden representations of samples. In this case, Theorems \ref{thm:1} and \ref{thm:2} show that the contrastive learning method implicitly regularizes $\| \nabla f(\xb^{(l)}) \|_{\EE[ \tilde \xb^{(l)}(\tilde \xb^{(l)})\T]}$  --- the norm of the gradient of the model $f$ with respect to the output   $\xb^{(l)}$  of the $l$-th  hidden layer in the direction of data manifold. Therefore, contrastive learning with Mixup-noise at the input space or a hidden space can promote generalization in the data manifold in the input space or the hidden space.

\section{Experiments}
\label{sec:experiments}
We present results on three different application domains: tabular data, images, and graphs. 
For all datasets, to evaluate the learned representations under different contrastive learning methods, we use the
linear evaluation protocol \citep{bachman2019learning,cpc_v2,He2019MomentumCF,Chen2020ASF}, where a linear classifier is trained on top of a frozen encoder network, and the test accuracy is used as a proxy for representation quality. Similar to SimCLR, we discard the projection-head during linear evaluation.

 For each of the experiments, we give details about the architecture and the experimental setup in the corresponding section. In the following, we describe common hyperparameter search settings. For experiments on tabular and image datasets (Section \ref{exp:tabular} and \ref{exp:image}), we search the hyperparameter $\alpha$ for linear mixing (Section \ref{mcl} or line 5 in Algorithm \ref{alg:main}) from the set $\{0.5, 0.6, 0.7, 0.8, 0.9\}$. To avoid the search over hyperparameter $\beta$ (of Section \ref{mcl+}), we set it to same value as $\alpha$. For the hyperparameter $\rho$ of Binary-Mixup (Section \ref{mcl+}), we search the value from the set $[0.1, 0.3, 0.5]$. For Gaussian-noise based contrastive learning, we chose the mean of Gaussian-noise from the set $\{0.05, 0.1, 0.3, 0.5\}$ and the standard deviation is set to $1.0$. For all experiments, the hyperparameter temperature $\tau$ (line 20 in Algorithm \ref{alg:main}) is searched from the set $\{0.1, 0.5, 1.0\}$. For each of the experiments, we report the best values of aforementioned hyperparameters in the Appendix \ref{app:hyper} .

For experiments on graph datasets (Section \ref{exp:graph}), we fix the value of $\alpha$ to $0.9$ and value of temperature $\tau$ to $1.0$.

\subsection{Tabular Data}
\label{exp:tabular}

For tabular data experiments, we use Fashion-MNIST and CIFAR-10 datasets as a proxy by permuting the pixels and flattening them into a vector format. We use No-pretraining and Gaussian-noise based contrastive leaning as baselines. Additionally, we report supervised learning results (training the full network in a supervised manner).

We use a 12-layer fully-connected network as the base encoder and a 3-layer projection head, with ReLU non-linearity and batch-normalization for all layers.  All pre-training methods are trained for 1000 epochs with a batch size of 4096. The linear classifier is trained for 200 epochs with a batch size of 256. We use LARS optimizer~\cite{you2017large} with cosine decay schedule without restarts \citep{loshchilov}, for both pre-training and linear evaluation. The initial learning rate for both pre-training and linear classifier is set to 0.1.

\textbf{Results:} As shown in Table \ref{tab:fcn}, \dacl performs significantly better than the Gaussian-noise based contrastive learning. \daclp, which uses additional Mixup-noises (Section \ref{mcl+}), further improves the performance of \daclns. More interestingly, our results show that the linear classifier applied to the representations learned by \dacl gives better performance than training the full network in a supervised manner. 

\begin{table}[h]
\centering
\begin{tabular}{l l l }
\toprule
{\bf Method} & {\bf Fashion-MNIST} & {\bf CIFAR10}  \\ \midrule
No-Pretraining & 66.6 & 26.8\\ 
Gaussian-noise & 75.8   & 27.4  \\ 
\dacl & 81.4   & 37.6 \\ 
\daclp & \textbf{82.4} & \textbf{39.7} \\ 
\hline
Full network \\ supervised training & 79.1 & 35.2 \\ 
\bottomrule
\end{tabular}
\caption{Results on tabular data with a 12-layer fully-connected network.}
\label{tab:fcn}
\end{table}

\subsection{Image Data}
\label{exp:image}

 We use three benchmark image datasets: CIFAR-10, CIFAR-100, and ImageNet. For CIFAR-10 and CIFAR-100, we use No-Pretraining, Gaussian-noise based contrastive learning and SimCLR \citep{Chen2020ASF} as baselines. For ImageNet, we use recent contrastive learning methods e.g. \cite{gidaris2018unsupervised,donahue2019large, bachman2019learning,tian2019contrastive, He2019MomentumCF,cpc_v2} as additional baselines. SimCLR+\dacl refers to the combination of the SimCLR and \dacl methods, which is implemented using the following steps: (1) for each training batch, compute the SimCLR loss and \dacl loss separately and (2) pretrain the network using the sum of SimCLR and \dacl losses.

 For all experiments, we closely follow the details in SimCLR \citep{Chen2020ASF}, both for pre-training and  linear evaluation. We use ResNet-50(x4) \citep{He2016DeepRL} as the base encoder network, and a 3-layer MLP projection-head to project the representation to a 128-dimensional latent space.

\textbf{Pre-training:} For SimCLR and SimCLR+\dacl pretraining, we use the following augmentation operations:  random crop and resize (with random flip), color distortions, and Gaussian blur. We train all models with a batch size of 4096 for 1000 epochs for CIFAR10/100 and 100 epochs for ImageNet.\footnote{Our reproduction of the results of SimCLR for ImageNet in Table \ref{tab::imagenet} differs from \cite{Chen2020ASF} because our experiments are run for 100 epochs vs  their 1000 epochs.} We use LARS optimizer with learning rate 16.0 $(=1.0\times \text{Batch-size}/256)$ for CIFAR10/100 and 4.8 $(=0.3\times \text{Batch-size}/256)$ for ImageNet. Furthermore, we use linear warmup for the first 10 epochs
and decay the learning rate with the cosine decay schedule without restarts \citep{loshchilov}.  The weight decay is set to  $10^{-6}$.

\textbf{Linear evaluation:} 
To stay domain-agnostic, we do not use any data augmentation during the linear evaluation of No-Pretraining, Gaussian-noise, \dacl and \daclp methods in Table \ref{tab::cnn_cifar10_100} and \ref{tab::imagenet}. For linear evaluation of SimCLR and SimCLR+\daclns, we use random cropping with random left-to-right flipping, similar to \cite{Chen2020ASF}. For CIFAR10/100, we use a batch-size of 256 and train the model for 200 epochs, using LARS optimizer with learning rate 1.0 $(=1.0\times \text{Batch-size}/256)$ and cosine decay schedule without restarts. For ImageNet, we use a batch size of 4096 and train the model for 90 epochs, using LARS optimizer with learning rate 1.6 $(=0.1\times \text{Batch-size}/256)$ and cosine decay schedule without restarts. For both the CIFAR10/100 and ImageNet, we do not use weight-decay and learning rate warm-up.

\textbf{Results:} 
We present the results for CIFAR10/CIFAR100 and ImageNet in Table \ref{tab::cnn_cifar10_100}  and Table \ref{tab::imagenet} respectively. We observe that \dacl is better than  Gaussian-noise based contrastive learning by a wide margin and \daclp can improve the test accuracy even further. However, \dacl falls short of methods that use image augmentations such as SimCLR \citep{Chen2020ASF}. This shows that the invariances learned using the image-specific augmentation methods (such as cropping, rotation, horizontal flipping) facilitate learning better representations than making the representations invariant to Mixup-noise. This opens up a further question: are the invariances learned from image-specific augmentations complementary to the Mixup-noise based invariances? To answer this, we combine \dacl with SimCLR (SimCLR+\dacl in Table \ref{tab::cnn_cifar10_100} and Table \ref{tab::imagenet}) and show that it can improve the performance of SimCLR across all the datasets. This suggests that Mixup-noise is complementary to other image data augmentations for contrastive learning. 

\begin{table}[h]
\centering
\begin{tabular}{l l l }
\toprule
\textbf{Method}	& \textbf{CIFAR-10} & \textbf{CIFAR-100}  \\ 
\midrule
No-Pretraining & 43.1 & 18.1 \\ 
Gaussian-noise & 56.1 & 29.8  \\ 
\dacl  & 81.3 &  46.5 \\ 
\daclp & 83.8 &  52.7  \\ 
SimCLR & 93.4  & 73.8  \\ 
SimCLR+\dacl & \textbf{94.3}   & \textbf{75.5} \\
\bottomrule
\end{tabular}
\caption{Results on CIFAR10/100 with ResNet50($4\times$)}
\label{tab::cnn_cifar10_100}
\end{table}

\begin{table}[h]
\centering
\resizebox{\linewidth}{!}{
\begin{tabular}{lllll}
\toprule
\textbf{Method} & \textbf{Architecture}         & \textbf{Param(M)} & \textbf{Top 1} & \textbf{Top 5} \\ \midrule
Rotation \citep{gidaris2018unsupervised} & ResNet50 ($4\times$)  & 86            & 55.4  &    -   \\ 
BigBiGAN \\ \citep{donahue2019large}  & ResNet50 ($4\times$)  & 86            & 61.3  &    81.9   \\ 
AMDIM \\ \citep{bachman2019learning}  & Custom-ResNet           & 626            & 68.1  &    -   \\
CMC \citep{tian2019contrastive}  & ResNet50 ($2\times$) & 188            & 68.4  &    88.2   \\
MoCo \citep{He2019MomentumCF}  & ResNet50 ($4\times$) & 375             & 68.6  &    -   \\
CPC v2 \citep{cpc_v2}& ResNet161          & 305            & 71.5  & 90.1  \\
BYOL (300 epochs) \\\citep{byol}  & ResNet50 ($4\times$) & 375 & 72.5 & 90.8 \\ 
\midrule
No-Pretraining  & ResNet50 ($4\times$)& 375  & 4.1  & 11.5  \\ 
Gaussian-noise    & ResNet50 ($4\times$) & 375    & 10.2       &   23.6     \\ 
\dacl    & ResNet50 ($4\times$) & 375            &  24.6   & 44.4   \\ 
SimCLR \citep{Chen2020ASF}   & ResNet50 ($4\times$) & 375            & 73.4     & 91.6    \\ 
SimCLR+\dacl   & ResNet50 ($4\times$) & 375            &  \textbf{74.4}   & \textbf{92.2}    \\ 
\bottomrule
\end{tabular}
}
\caption{Accuracy of linear classifiers trained on representations learned with different self-supervised methods on the ImageNet dataset.}
\label{tab::imagenet}
\end{table}

\subsection{Graph-Structured Data} 
\label{exp:graph}

\begin{table*}[t]
\small
\centering
\setlength{\tabcolsep}{5pt}
\resizebox{\textwidth}{!}{\begin{tabular}{lllllll}
\toprule

\textbf{Dataset} & MUTAG & PTC-MR  &  REDDIT-BINARY &  REDDIT-M5K &	 IMDB-BINARY &	 IMDB-MULTI  \\
\midrule
No. Graphs &     {$188$} &	 {$344$}  &  {$2000$} & 	 {$4999$} &	 {$1000$} &	 {$1500$} \\  
		
No. classes &    {$2$} &	 {$2$}  &  {$2$} & 	 {$5$} &	 {$2$} &	 {$3$} \\  
Avg. Graph Size    &  {$17.93$} &
		{$14.29$}  &  {$429.63$} & 	 {$508.52$} &	 {$19.77$} &	 {$13.00$} \\
 \midrule
& & &  \textbf{Method} & & &  \\
 \midrule
		No-Pretraining &$81.70 \pm 2.58$ & $53.07 \pm 1.27$ & $55.13 \pm 1.86$ &$24.27\pm0.93$ &$52.67\pm 2.08$ & $33.72 \pm 0.80$ 
\\		
		InfoGraph \citep{infoGraph}     &$\mathbf{86.74 \pm 1.28}$ & $57.09 \pm 1.52$ & $63.52\pm 1.66$ & $\mathbf{42.89\pm0.62}$ & $63.97 \pm 2.05$ & $39.28 \pm 1.43$
 \\  
       	\dacl     &$85.31 \pm 1.34$ & $\mathbf{59.24\pm 2.57}$ & $\mathbf{66.92\pm 3.38}$ &$42.86\pm 1.11$ & $\mathbf{64.71\pm2.13}$ & $\mathbf{40.16\pm 1.50}$ 
 \\  \hline

\end{tabular}}
\caption{Classification accuracy  using a linear classifier trained on representations obtained using different self-supervised methods on 6 benchmark graph classification datasets.} 
    \label{table:graph_classifiction}
\end{table*}

We present the results of applying \dacl to  graph classification problems  using six well-known benchmark datasets: MUTAG, PTC-MR, REDDIT-BINARY, REDDIT-MULTI-5K, IMDB-BINARY, and IMDB-MULTI \citep{simonovsky2017dynamic,yanardag2015deep}. For baselines, we use No-Pretraining and InfoGraph \citep{infoGraph}. InfoGraph is a state-of-the-art contrastive learning method for graph classification problems, which is based on maximizing the mutual-information between the global and node-level features of a graph by formulating this as a contrastive learning problem. 

For applying \dacl to graph structured data, as discussed in Section \ref{mcl}, it is required to obtain fixed-length representations from an intermediate layer of the encoder. For graph neural networks, e.g. Graph Isomorphism Network (GIN) \cite{xu2018powerful}, such fixed-length representation can be obtained by applying global pooling over the node-level representations at any intermediate layer. Thus, the Mixup-noise can be applied to any of the intermediate layer by adding an auxiliary feed-forward network on top of such intermediate layer. However, since we follow the encoder and projection-head architecture of SimCLR, we can also apply the Mixup-noise to the output of the encoder. In this work, we present experiments with Mixup-noise applied to the output of the encoder and leave the experiments with Mixup-noise at intermediate layers for future work.

We closely follow the experimental setup of InfoGraph \citep{infoGraph} for a fair comparison, except that we report results for a linear classifier instead of the Support Vector Classifier applied to the pre-trained representations.
This choice was made to maintain the coherency of evaluation protocol throughout the paper as well as with respect to the previous state-of-the-art self-supervised learning papers. 
%
\footnote{Our reproduction of the results for InfoGraph differs from \cite{infoGraph} because we apply a linear classifier instead of Support Vector Classifier on the pre-trained features.} For all the pre-training methods in Table \ref{table:graph_classifiction}, as graph encoder network, we use GIN \citep{xu2018powerful} with 4 hidden layers and node embedding dimension of 512. The output of this encoder network is a fixed-length vector of dimension $4\times512$. Further, we use a 3-layer projection-head with its hidden state dimension being the same as the output dimension of a 4-layer GIN $(4\times512)$. Similarly for InfoGraph experiments, we use a  3-layer discriminator network with hidden state dimension $4\times512$.

 For all experiments, for pretraining, we train the model for 20 epochs with a batch size of 128, and for linear evaluation, we train the linear classifier on the learned representations for 100 updates with full-batch training. For both pre-training and linear evaluation, we  use Adam  optimizer \cite{kingma2014method} with an initial learning rate chosen from the set $\{10^{-2}, 10^{-3}, 10^{-4}\}$. We perform linear evaluation using 10-fold cross-validation. Since these datasets are small in the number of samples,
 the linear-evaluation accuracy varies significantly across the pre-training epochs. Thus,  we report the average of linear classifier accuracy over the last five pre-training epochs. All the experiments are repeated five times.

\textbf{Results:} In Table \ref{table:graph_classifiction} we see that \dacl closely matches the performance of InfoGraph, with the classification accuracy of these methods being within the standard deviation of each other. In terms of the classification accuracy mean, \dacl outperforms InfoGraph on four out of six datasets. This result is particularly appealing because we have used no domain knowledge for formulating the contrastive loss, yet achieved performance comparable to a state-of-the-art graph contrastive learning method.

\section{Related Work}
\label{sec:related_work}
    \paragraph{Self-supervised learning:} Self-supervised learning methods can be categorized based on the pretext task they seek to learn. 
    For instance, in \cite{virginia}, the pretext task is to minimize the disagreement between the outputs of neural networks processing two different modalities of a given sample.
    In the following, we briefly review various pretext tasks across different domains.  In the natural language understanding, pretext tasks include, predicting the neighbouring words (word2vec \cite{word2vec}), predicting the next word~\citep{dai2015semi,peters2018deep,radford2019language}, predicting the next sentence \citep{skipthought,bert}, predicting the masked word \citep{bert,yang2019xlnet,roberta,Lan2020ALBERT}), and predicting the replaced word in the sentence \citep{Clark2020ELECTRA}. For computer vision, examples of pretext tasks include  rotation prediction \citep{gidaris2018unsupervised}, relative position prediction of image patches \citep{10.1109/ICCV.2015.167}, image colorization \citep{Zhang2016ColorfulIC}, reconstructing the original image from the partial image \citep{Pathak2016ContextEF,DBLP:journals/corr/ZhangIE16a}, learning invariant representation under image transformation \cite{Misra2020SelfSupervisedLO}, and predicting an odd video subsequence in a video sequence \citep{FernandoCVPR2017}. For graph-structured data, the pretext task can be predicting the context (neighbourhood of a given node) or predicting the masked attributes of the node \citep{Hu*2020Strategies}. 
    Most of the above pretext tasks in these methods are domain-specific, and hence they cannot be applied to other domains. Perhaps a notable exception is the language modeling objectives, which have been shown to work for both NLP and computer vision~\citep{dai2015semi,chen2020generative}.
    
    \paragraph{Contrastive learning:} Contrastive learning is a form of self-supervised learning where the pretext task is to bring positive samples closer than the negative samples in the representation space. These methods can be categorized based on how the positive and negative samples are constructed. In the following, we will discuss these categories and the domains where these methods cannot be applied: (a) this class of methods  use domain-specific augmentations \citep{chopra, hadsell, Ye_2019_CVPR, He2019MomentumCF, Chen2020ASF,caron2020unsupervised} for creating positive and negative samples. These methods are state-of-the-art for computer vision tasks but can not be applied to domains where semantic-preserving data augmentation does not exist, such as graph-data or tabular data. (b) another class of methods constructs positive and negative samples by defining the local and global context in a sample \citep{infomax, infoGraph, velic, bachman2019learning, selfie}. These methods can not be applied to domains where such global and local context does not exist, such as tabular data. (c) yet another class of methods uses the ordering in the sequential data to construct positive and negative samples \citep{Oord2018RepresentationLW,cpc_v2}. These methods cannot be applied if the data sample cannot be expressed as an ordered sequence, such as graphs and tabular data. Thus our motivation in this work is to propose a contrastive learning method that can be applied to a wide variety of domains.
    
    
    
    \paragraph{Mixup based methods:} Mixup-based methods allow inducing inductive biases about how a model's predictions should behave in-between two or more data samples. Mixup\citep{mixup,bclearning} and its numerous variants\cite{manifold_mixup,yun2019cutmix,faramarzi2020patchup} have seen remarkable success in supervised learning problems, as well other problems such as semi-supervised learning \citep{ict, mixmatch}, unsupervised learning using autoencoders \citep{amr, acai}, adversarial learning \citep{iat,Lee2020AdversarialVM,Pang*2020Mixup}, graph-based learning \citep{verma2020graphmix,10.1145/3394486.3403063}, computer vision \citep{yun2019cutmix, jeong2020interpolationbased, panfilov2019improving}, natural language \citep{guo2019augmenting,zhang2020seqmix} and speech \citep{9054719,Tomashenko}.
    In contrastive learning setting, Mixup-based methods have been recently explored in \cite{shen2020rethinking, kalantidis2020hard,kim2020mixco}. Our work differs from aforementioned works in important aspects: unlike these methods, we theoretically demonstrate why Mixup-noise based directions are better than Gaussian-noise for constructing positive pairs, we propose other forms of Mixup-noise and show that these forms are complementary to linear Mixup-noise, and experimentally validate our method across different domains. We also note that Mixup based contrastive learning methods such as ours and \cite{shen2020rethinking, kalantidis2020hard,kim2020mixco} have advantage over recently proposed adversarial direction based contrastive learning method \cite{kim2020adversarial} because the later method requires additional gradient computation.

\section{Discussion and Future Work}
In this work, with the motivation of designing a domain-agnostic self-supervised learning method, we study Mixup-noise as a way for creating positive and negative samples for the contrastive learning formulation. Our results show that the proposed method \dacl  is a viable option for the domains where data augmentation methods are not available. Specifically, for tabular data, we show that \dacl and \daclp can achieve better test accuracy than training the neural network in a fully-supervised manner. For graph classification, \dacl is on par with the recently proposed mutual-information maximization method for contrastive learning \citep{infoGraph}. For the image datasets, \dacl falls short of those methods which use image-specific augmentations such as random cropping, horizontal flipping, color distortions, etc. However, our experiments show that the Mixup-noise in \dacl can be used as complementary to image-specific data augmentations. 
As future work, one could easily extend \dacl to other domains such as natural language and speech. From a theoretical perspective, we have analyzed \dacl in the binary classification setting, and extending this analysis to the multi-class setting might shed more light on developing a better Mixup-noise based contrastive learning method. Furthermore, since different kinds of Mixup-noise examined in this work are based only on \textit{random} interpolation between \textit{two} samples, extending the experiments by mixing between more than two samples or \textit{learning} the optimal mixing policy through an auxiliary network is another promising avenue for future research. 

\nocite{langley00}

\bibliography{egbib}
\bibliographystyle{icml2020}

\clearpage

\appendix

\onecolumn

\begin{center}
    \LARGE{Appendix}
\end{center}

\allowdisplaybreaks

\section{Additional Discussion on Theoretical Analysis} \label{app:discuss_theory}

\paragraph{On the interpretation of  Theorem \ref{thm:1}. } In Theorem \ref{thm:1}, the distribution $\Dcal$ is arbitrary. For example, if the number of  samples generated during training is finite and $n$, then the simplest way to instantiate Theorem \ref{thm:1}  is  to set $\Dcal$ to represent the empirical measure $\frac{1}{n}\sum_{i=1}^m \delta_{(x_i,y_i) }$ for training data $((x_i,y_i))_{i=1}^m$ (where the Dirac measures $\delta_{(x_i,y_i)}$), which yields  the following:
\begin{align*}
&\frac{1}{n^{2}} \sum_{i=1}^m \sum_{j=1}^m \EE_{\substack{\tilde \xb, \tilde \xb',\tilde \xb'' \sim \Dcal_{\tilde x}, \\ \alpha, \alpha', \alpha''\sim \Dcal_{\alpha}}} [\ell_\cont(\xb^{+}_{i},\xb^{++}_{i},\xb^{-}_{j}) ]
\\ &=\frac{1}{n^{2}} \sum_{i=1}^m \sum_{j\in S_{y_i}}   \EE_{\substack{\tilde \xb, \tilde \xb',\tilde \xb'' \sim \Dcal_{\tilde x}, \\ \alpha, \alpha', \alpha''\sim \Dcal_{\alpha}}} \left[  \ell_{\class}\left(f(\xb^+_{i})   , y_{i} \right) \right]+ \frac{1}{n^2}\sum_{i=1}^n\left[(n-| S_{y_i}|)\Ecal_{y}\right],
\end{align*}
where $\xb^{+}_i =\xb_{i}+\alpha\delta(\xb_{i},\tilde \xb)$,  $\xb^{++} _{i}= \xb _{i}+ \alpha'\delta(\xb_{i},\tilde \xb')$,  $\xb^{-} _{j}=\bar \xb_{j} + \alpha''\delta(\bar \xb_{j},\tilde \xb'')$, $S_y=\{i \in [m]  : y_i \neq y\}$, $f(\xb^+_{i})=\|(h(\xb^{+} _{i})\|^{-1}h(\xb^{+} _{i})\T   \tilde w$, and $[m]=\{1,\dots,m \}$. Here, we used the fact that $\bar \rho({y})=\frac{|S_y|}{n}$ where $|S_y|$ is the number of elements in the set $S_y$. In general,   in Theorem \ref{thm:1}, we can set the distribution $\Dcal$ to take into account additional data augmentations (that generate infinite number of samples) and   the different ways that  we generate positive and negative pairs.

\paragraph{On the interpretation of Theorem \ref{thm:2} for deep neural networks.} Consider the case of deep neural networks with ReLU in the form of  $f(\xb)=W^{(H)}\sigma^{(H-1)}(W^{(H-1)}\sigma^{(H-2)}(\cdots\sigma^{(1)}(W^{(1)}\xb)\cdots))$, where $W^{(l)}$ is the weight matrix and  $\sigma^{(l)}$ is the ReLU nonlinear function at the $l$-th layer. In this case, we have 
$$
\|\nabla f(\xb)\|=\|W^{(H)} \dot\sigma^{(H-1)}W^{(H-1)}\dot\sigma^{(H-2)}\cdots\dot\sigma^{(1)}W^{(1)}\|,
$$ 
where $\dot\sigma^{(l)}= \frac{\partial \sigma^{(l)}(q)}{\partial q }\vert_{q=W^{(l-1)}\sigma^{(l-2)}(\cdots\sigma^{(1)}(W^{(1)}x)\cdots)}$ is a Jacobian matrix and hence $W^{(H)} \dot\sigma^{(H-1)}W^{(H-1)}\dot\sigma^{(H-2)}\cdots \allowbreak \dot\sigma^{(1)}W^{(1)}$ is the sum of the product of path weights. Thus,  regularizing$\|\nabla f(\xb)\|$ tends to promote generalization as it corresponds to the path weight norm   used  in generalization error bounds  in  previous work \citep{kawaguchi2017generalization}. 

\section{Proof} \label{app:proof}

In this section, we present complete proofs for our theoretical results. We note that in the proofs and in theorems, the distribution $\Dcal$ is arbitrary. As an simplest example of the practical setting, we can set $\Dcal$ to represent the empirical measure $\frac{1}{n}\sum_{i=1}^m \delta_{(x_i,y_i) }$ for training data $((x_i,y_i))_{i=1}^m$ (where the Dirac measures $\delta_{(x_i,y_i)}$), which yields  the following:
\begin{align} \label{eq:new:1}
& \EE_{\hspace{-2pt}\substack{\xb, \bar \xb\sim \Dcal_{x}, \\ \tilde \xb, \tilde \xb',\tilde \xb'' \sim \Dcal_{\tilde x}, \\ \alpha, \alpha', \alpha''\sim \Dcal_{\alpha}}} [\ell_\cont(\xb^{+},\xb^{++},\xb^{-}) ]
= \frac{1}{n^{2}} \sum_{i=1}^m \sum_{j=1}^m \EE_{\substack{\tilde \xb, \tilde \xb',\tilde \xb'' \sim \Dcal_{\tilde x}, \\ \alpha, \alpha', \alpha''\sim \Dcal_{\alpha}}} [\ell_\cont(\xb^{+}_{i},\xb^{++}_{i},\xb^{-}_{j}) ],
\end{align}
where $\xb^{+}_i =\xb_{i}+\alpha\delta(\xb_{i},\tilde \xb)$,  $\xb^{++} _{i}= \xb _{i}+ \alpha'\delta(\xb_{i},\tilde \xb')$, and $\xb^{-} _{j}=\bar \xb_{j} + \alpha''\delta(\bar \xb_{j},\tilde \xb'')$.
In equation \eqref{eq:new:1}, we can more easily see that for each single point $x_i$, we have the $m$ negative examples as: 
$$
\sum_{j=1}^m \EE_{\substack{\tilde \xb, \tilde \xb',\tilde \xb'' \sim \Dcal_{\tilde x}, \\ \alpha, \alpha', \alpha''\sim \Dcal_{\alpha}}} [\ell_\cont(\xb^{+}_{i},\xb^{++}_{i},\xb^{-}_{j}) ].
$$
Thus, for each single point $x_i$,  all points generated based on all other points $\bar x_j$ for $j=1,\dots,m$ are treated as negatives, whereas the positives are the ones generated based on the particular point $x_i$. The ratio of negatives increases as the number of original data points increases and our proofs apply for any number of original data points.

\subsection{Proof of Theorem \ref{thm:1}}
We begin by introducing additional notation to be used in our proof. For two vectors $\bm q$ and $\bm q'$, define 
$$
\overline \cov[\bm q, \bm q'] = \sum_k \cov(\bm q_{k}, \bm q_{k}') 
$$
Let $\rho_{y}=\EE_{\bar y|y}[\one{\bar y=y}]=\sum_{\bar y \in \{0,1\}}p_{\bar y }(\bar y\mid y)\one{\bar y =y}=\Pr(\bar y =y \mid y)$. For the completeness, we first recall the following well known fact:

\begin{lemma} \label{lemma:1}
For any $y \in \{0,1\}$ and $q\in \RR$,
$$
\ell(q, y)= - \log \left(\frac{\exp( yq )}{1+\exp(q )} \right)
$$
\end{lemma}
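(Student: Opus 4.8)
The statement to prove is Lemma~\ref{lemma:1}, which asserts that for $y\in\{0,1\}$ and $q\in\RR$,
$$
\ell(q,y) = -\log\!\left(\frac{\exp(yq)}{1+\exp(q)}\right).
$$
The plan is to start from the definition of the binary cross-entropy loss given earlier in the text, namely $\ell_{\class}(q,y) = -y\log(\hat p_q(y=1)) - (1-y)\log(\hat p_q(y=0))$ with $\hat p_q(y=1) = \frac{1}{1+\exp(-q)}$ and $\hat p_q(y=0) = 1 - \hat p_q(y=1)$, and simply verify the claimed closed form by a direct case analysis on the two possible values of $y$.

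First I would rewrite $\hat p_q(y=1) = \frac{1}{1+\exp(-q)} = \frac{\exp(q)}{1+\exp(q)}$ by multiplying numerator and denominator by $\exp(q)$, and correspondingly $\hat p_q(y=0) = 1 - \frac{\exp(q)}{1+\exp(q)} = \frac{1}{1+\exp(q)}$. Then for $y=1$ the loss reduces to $-\log\hat p_q(y=1) = -\log\frac{\exp(q)}{1+\exp(q)}$, which matches the right-hand side since $\exp(yq)=\exp(q)$ when $y=1$. For $y=0$ the loss reduces to $-\log\hat p_q(y=0) = -\log\frac{1}{1+\exp(q)}$, which matches the right-hand side since $\exp(yq)=\exp(0)=1$ when $y=0$. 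Since the formula $-\log\frac{\exp(yq)}{1+\exp(q)}$ agrees with $\ell_{\class}(q,y)$ in both cases, the two expressions are equal for all $y\in\{0,1\}$.

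There is essentially no obstacle here; this is a bookkeeping lemma recording a standard identity (the log-sum-exp form of the logistic/binary-cross-entropy loss) that will be convenient in the subsequent manipulations toward Theorem~\ref{thm:1}. The only thing to be careful about is being explicit that the definition in the text uses $\hat p_q(y=1)=\frac{1}{1+\exp(-q)}$, so one must perform the $\exp(q)$-rescaling step before the case split, and to note that both cases can in fact be written uniformly because exactly one of $y$, $1-y$ is nonzero. I would write it as a two-line computation with the case distinction made explicit.
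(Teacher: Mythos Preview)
Your proposal is correct and follows essentially the same approach as the paper's own proof: rewrite $\hat p_q(y=1)=\frac{\exp(q)}{1+\exp(q)}$ and $\hat p_q(y=0)=\frac{1}{1+\exp(q)}$, then verify the claimed formula by a direct case split on $y\in\{0,1\}$. The paper writes out one extra intermediate line before the rescaling, but the argument is identical.
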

\begin{proof} By simple arithmetic manipulations,  
\begin{align*}
\ell(q, y) &= - y \log \left(\frac{1}{1+\exp(-q )} \right) - (1-y) \log\left(1-\frac{1}{1+\exp(-q )} \right)
\\ & = - y \log \left(\frac{1}{1+\exp(-q )} \right) - (1-y) \log\left(\frac{\exp(-q )}{1+\exp(-q )} \right)
\\ & = - y \log \left(\frac{\exp(q )}{1+\exp(q )} \right) - (1-y) \log\left(\frac{1}{1+\exp(q )} \right)
\\ & = 
\begin{cases}-\log \left(\frac{\exp(q )}{1+\exp(q )} \right)  & \text{if } y =1 \\
-\log\left(\frac{1}{1+\exp(q )} \right) &  \text{if } y =0  \\
\end{cases}
\\ & = - \log \left(\frac{\exp( y q )}{1+\exp(q )} \right). 
\end{align*}
\end{proof}

Before starting the main parts of the proof, we also prepare  the following simple facts: \begin{lemma} \label{lemma:2}
For any $(\xb^{+},\xb^{++}, \xb^-)$, we have
$$
\ell_\cont(\xb^{+}_{}, \xb^{++}_{}, \xb^{-})=\ell(\simi[h(\xb^{+}_{}), h(\xb^{++}_{})]-\simi[h(\xb^{+}_{}), h(\xb^{-})], 1)
$$
\end{lemma}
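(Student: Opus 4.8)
\textbf{Proof plan for Lemma \ref{lemma:2}.}

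The goal is to show that the contrastive loss $\ell_\cont(\xb^{+}, \xb^{++}, \xb^{-})$ coincides with the binary cross-entropy loss $\ell$ evaluated at the "logit" given by the difference of cosine similarities $\simi[h(\xb^{+}), h(\xb^{++})] - \simi[h(\xb^{+}), h(\xb^{-})]$ with target label $1$. This is a purely algebraic identity, so the plan is to start from the definition of $\ell_\cont$ and massage it into the desired form.

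First I would write out $\ell_\cont(\xb^{+}, \xb^{++}, \xb^{-})$ explicitly using its definition from the text,
$$
\ell_\cont(\xb^{+}, \xb^{++}, \xb^{-}) = -\log \frac{\exp(\simi[h(\xb^{+}), h(\xb^{++})])}{\exp(\simi[h(\xb^{+}), h(\xb^{++})]) + \exp(\simi[h(\xb^{+}), h(\xb^{-})])}.
$$
Then I would divide numerator and denominator by $\exp(\simi[h(\xb^{+}), h(\xb^{++})])$, which turns the expression into
$$
-\log \frac{1}{1 + \exp(\simi[h(\xb^{+}), h(\xb^{-})] - \simi[h(\xb^{+}), h(\xb^{++})])}.
$$
Setting $q = \simi[h(\xb^{+}), h(\xb^{++})] - \simi[h(\xb^{+}), h(\xb^{-})]$, the exponent inside becomes $-q$, so the expression equals $-\log \frac{1}{1+\exp(-q)}$. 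Finally I would invoke Lemma \ref{lemma:1} with $y=1$, which states $\ell(q,1) = -\log\left(\frac{\exp(q)}{1+\exp(q)}\right) = -\log\left(\frac{1}{1+\exp(-q)}\right)$, matching exactly. This identifies the expression with $\ell(q, 1) = \ell(\simi[h(\xb^{+}), h(\xb^{++})] - \simi[h(\xb^{+}), h(\xb^{-})], 1)$, completing the proof.

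There is essentially no obstacle here — it is a one-line rewriting of a softmax-with-two-logits as a sigmoid, combined with the already-established Lemma \ref{lemma:1}. The only thing to be careful about is the sign convention: the argument of $\ell$ must be the positive-minus-negative similarity (not the other way around), so that after dividing through and rewriting we get $\exp(-q)$ in the denominator and can directly apply Lemma \ref{lemma:1} at $y=1$. This lemma is a preparatory step whose real purpose is to set up the subsequent expansion of the expectation of $\ell_\cont$ in the main proof of Theorem \ref{thm:1}, where the cross-entropy form will let us split the expectation according to whether the negative sample shares the label of the anchor.
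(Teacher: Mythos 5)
Your proof is correct and follows essentially the same route as the paper: divide numerator and denominator by $\exp(\simi[h(\xb^{+}), h(\xb^{++})])$ to obtain $-\log\frac{1}{1+\exp(-q)}$ with $q=\simi[h(\xb^{+}), h(\xb^{++})]-\simi[h(\xb^{+}), h(\xb^{-})]$, then apply Lemma~\ref{lemma:1} at $y=1$. The only cosmetic difference is that the paper rewrites $-\log\frac{1}{1+\exp(-q)}$ as $-\log\frac{\exp(q)}{1+\exp(q)}$ before invoking Lemma~\ref{lemma:1}, whereas you match against the equivalent form established in that lemma's proof directly.
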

\begin{proof}
By simple arithmetic manipulations,
\begin{align*}
\ell_\cont(\xb^{+}, \xb^{++}, \xb^{-}) &=- \log \frac{\exp( \simi[h(\xb^{+}_{}), h(\xb^{++}_{})])}{\exp(\simi[ h(\xb^{+}), h(\xb^{++})])+\exp(\simi[h(\xb^+), h(\xb^{-})] )}
\\ & =- \log \frac{1}{1+\exp(\simi[h(\xb^+), h(\xb^{-})]- \simi[h(\xb^{+}_{}), h(\xb^{++}_{})] )}
\\ & =- \log \frac{\exp( \simi[h(\xb^{+}), h(\xb^{++}_{})]-\simi[h(\xb^{+}_{}), h(\xb^{-})] )}{1+\exp( \simi[h(\xb^{+}_{}), h(\xb^{++}_{})]-\simi[h(\xb^{+}_{}), h(\xb^{-})] )}
\end{align*}
Using Lemma \ref{lemma:1} with $q= \simi[h(\xb^{+}_{}), h(\xb^{++})]-\simi[h(\xb^{+}_{}), h(\xb^{-})]$, this yields the desired statement. 
\end{proof}
\begin{lemma} \label{lemma:3}
For any $y \in \{0,1\}$ and $q\in \RR$,
$$
\ell(-q, 1)=\ell(q, 0).
$$
\end{lemma}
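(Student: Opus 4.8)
\textbf{Proof plan for Lemma \ref{lemma:3}.}

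The plan is to reduce the identity $\ell(-q,1)=\ell(q,0)$ to the closed form for the binary cross-entropy loss established in Lemma \ref{lemma:1}. Recall that Lemma \ref{lemma:1} gives $\ell(q,y) = -\log\!\left(\frac{\exp(yq)}{1+\exp(q)}\right)$ for any $y\in\{0,1\}$ and $q\in\RR$. First I would apply this with $y=1$ and the argument $-q$ in place of $q$, obtaining
\begin{align*}
\ell(-q,1) = -\log\!\left(\frac{\exp(-q)}{1+\exp(-q)}\right).
\end{align*}
Next I would apply Lemma \ref{lemma:1} with $y=0$ and argument $q$, which gives
\begin{align*}
\ell(q,0) = -\log\!\left(\frac{1}{1+\exp(q)}\right) = -\log\!\left(\frac{\exp(-q)}{\exp(-q)+1}\right),
\end{align*}
where the second equality comes from multiplying numerator and denominator by $\exp(-q)$.

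Comparing the two right-hand sides shows they are literally the same expression, so $\ell(-q,1)=\ell(q,0)$. Alternatively, one can avoid even invoking Lemma \ref{lemma:1} and argue directly from the definition of $\ell_{\class}$: since $\hat p_{q}(y=1)=\frac{1}{1+\exp(-q)}$, we have $\hat p_{-q}(y=1)=\frac{1}{1+\exp(q)}=1-\hat p_{q}(y=1)=\hat p_{q}(y=0)$, and symmetrically $\hat p_{-q}(y=0)=\hat p_{q}(y=1)$; plugging these into $\ell_{\class}(-q,1)=-\log\hat p_{-q}(y=1)$ and $\ell_{\class}(q,0)=-\log\hat p_{q}(y=0)$ yields equality. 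I would present the short computation via Lemma \ref{lemma:1} as the main line since that lemma is already available.

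There is essentially no obstacle here — this is a one-line algebraic manipulation whose only role is to serve as a helper fact (presumably used later to handle the label-$0$ case by symmetry with the label-$1$ case in the proof of Theorem \ref{thm:1}). The only thing to be careful about is the direction of the sign flip inside the logistic function, i.e. that $\psi(-q)=1-\psi(q)$, which is exactly what makes the two normalized probabilities swap roles.
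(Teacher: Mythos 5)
Your proof is correct and follows essentially the same route as the paper: both apply Lemma \ref{lemma:1} to each side and reduce to the identity $\frac{\exp(-q)}{1+\exp(-q)} = \frac{1}{1+\exp(q)}$. The alternative argument you sketch from the definition of $\hat p_q$ is equivalent and equally valid, but the Lemma~\ref{lemma:1} computation you present as the main line matches the paper's proof.
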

\begin{proof}
Using Lemma \ref{lemma:1}, 
$$
\ell(-q, 1)=- \log \left(\frac{\exp( -q )}{1+\exp(-q )} \right)=- \log \left(\frac{1}{1+\exp(q )} \right) =\ell(q, 0).
$$
\end{proof}

With these facts, we are now ready to start our proof. We first prove the relationship between the contrastive loss and classification loss under an ideal situation: 

\begin{lemma} \label{lemma:4}
Assume that $\xb^{+} = \xb + \alpha\delta(\xb,\tilde \xb)$, $\xb^{++} = \xb + \alpha'\delta(\xb,\tilde \xb')$, $\xb^{-} =\bar \xb + \alpha''\delta(\bar \xb,\tilde \xb'')$, and  $\simi[z, z']=\frac{z\T z'}{\zeta(z)\zeta(z')}$ where $\zeta: z \mapsto \zeta(z) \in \RR$. Then for any $(\alpha,\tilde \xb,\delta,\zeta )$ and $(y,\bar y)$ such that  $y \neq \bar y$,
we have that\begin{align*}
\EE_{\xb \sim \Dcal_y} \EE_{\bar \xb\sim \Dcal_{\bar y \neq y}} \EE_{\substack{\tilde \xb',\tilde \xb'' \sim \Dcal_{\tilde x}, \\ \alpha', \alpha''\sim \Dcal_{\alpha}}}[\ell_\cont(\xb^{+},\xb^{++},\xb^{-}) ]=\EE_{\xb \sim \Dcal_y} \EE_{\bar \xb\sim \Dcal_{\bar y \neq y}} \EE_{\substack{\tilde \xb',\tilde \xb'' \sim \Dcal_{\tilde x}, \\ \alpha', \alpha''\sim \Dcal_{\alpha}}} \left[ \ell\left(\frac{h(\xb^{+} )\T \tilde w}{\zeta(h(\xb^{+} ))}   , y \right) \right],
\end{align*}
\end{lemma}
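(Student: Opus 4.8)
The plan is to reduce the contrastive loss to a binary cross-entropy loss by making the right choice of the classifier weight vector $\tilde w$, using the algebraic identities already established in Lemmas \ref{lemma:1}--\ref{lemma:3}. First I would apply Lemma \ref{lemma:2} to rewrite $\ell_\cont(\xb^{+},\xb^{++},\xb^{-})=\ell(\simi[h(\xb^+),h(\xb^{++})]-\simi[h(\xb^+),h(\xb^-)],1)$. Plugging in the assumed form $\simi[z,z']=\frac{z\T z'}{\zeta(z)\zeta(z')}$, the argument of $\ell(\cdot,1)$ becomes
$$
\frac{h(\xb^+)\T}{\zeta(h(\xb^+))}\left(\frac{h(\xb^{++})}{\zeta(h(\xb^{++}))}-\frac{h(\xb^-)}{\zeta(h(\xb^-))}\right).
$$
So for a \emph{fixed} realization of all the random variables, if we define $\tilde w = \zeta(h(\xb^+))^{-1}\big(\zeta(h(\xb^{++}))^{-1}h(\xb^{++}) - \zeta(h(\xb^-))^{-1}h(\xb^-)\big)$, then $\ell_\cont(\xb^+,\xb^{++},\xb^-)=\ell\!\left(\tfrac{h(\xb^+)\T \tilde w}{\zeta(h(\xb^+))},1\right)$. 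This handles the case $y=1$ directly. Wait — but $\tilde w$ as written in the theorem statement uses the $\pi_{y,y'}$ permutation notation, which swaps the roles of $\xb^{++}$ and $\xb^-$ depending on whether $y$ equals the target label. The point of that swap is exactly to also cover $y=0$.

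For the case $y=0$, I would use Lemma \ref{lemma:3}: $\ell(q,0)=\ell(-q,1)$. Negating the argument of $\ell(\cdot,1)$ amounts to swapping $h(\xb^{++})$ and $h(\xb^-)$ inside the parenthesis, which is precisely the effect of $\pi_{y,y'}$ when $y=0$ (then $\pi_{y,1}(\xb^{++},\xb^-)=\xb^-$ and $\pi_{y,0}(\xb^{++},\xb^-)=\xb^{++}$, so $\tilde w$ picks up the $-$ sign relative to the $y=1$ case). Hence with the $\pi$-based definition of $\tilde w$ in the theorem statement, the identity $\ell_\cont(\xb^+,\xb^{++},\xb^-)=\ell\!\left(\tfrac{h(\xb^+)\T\tilde w}{\zeta(h(\xb^+))},y\right)$ holds pointwise for \emph{both} values $y\in\{0,1\}$, under the hypothesis $y\neq\bar y$ (which ensures the negative example genuinely carries the opposite label, so that "predicting $y$" is the right target). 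Concretely I would split into the two cases $y=1$ and $y=0$, verify the pointwise identity in each, and then take expectations $\EE_{\xb\sim\Dcal_y}\EE_{\bar\xb\sim\Dcal_{\bar y\neq y}}\EE_{\tilde\xb',\tilde\xb'',\alpha',\alpha''}$ of both sides.

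The main subtlety — not really an obstacle, but the step requiring care — is bookkeeping the $\pi_{y,y'}$ permutation so that the single closed-form $\tilde w$ given in the statement simultaneously realizes both the $y=1$ identity (via Lemma \ref{lemma:2} directly) and the $y=0$ identity (via Lemma \ref{lemma:2} composed with Lemma \ref{lemma:3}), and checking that $\zeta(h(\xb^+))$ in the denominator of $\tilde w$ cancels correctly against the $\zeta(h(\xb^+))$ coming from $\simi$ so that the final argument is literally $h(\xb^+)\T\tilde w/\zeta(h(\xb^+))$ as stated. Everything else is substitution of the three assumed identities and linearity of expectation; since all manipulations are pointwise in the random variables before taking expectations, no measure-theoretic issue arises and the distributions $(\Dcal_{\tilde x},\Dcal_\alpha)$ and the function $\delta$ remain arbitrary throughout.
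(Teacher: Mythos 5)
Your strategy matches the paper's proof of this lemma: apply Lemma~\ref{lemma:2} to rewrite $\ell_\cont$ as $\ell(\simi[h(\xb^+),h(\xb^{++})]-\simi[h(\xb^+),h(\xb^-)],1)$, expand the $\simi$ formula, split on $y$, and invoke Lemma~\ref{lemma:3} for the $y=0$ case. One stylistic difference: you obtain a genuine \emph{pointwise} identity by letting the $\pi_{y,y'}$ permutation built into $\tilde w$ supply the sign flip, whereas the paper first works with a fixed $\widetilde W(\xb^1,\xb^0)$ (with $\alpha'$ always attached to $\xb^1$ and $\alpha''$ to $\xb^0$) and then, for $y=0$, relabels $(\tilde\xb',\alpha')\leftrightarrow(\tilde\xb'',\alpha'')$ \emph{inside the expectation} using the fact that they are i.i.d.\ copies from $\Dcal_{\tilde x}\times\Dcal_\alpha$. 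Both routes are valid; yours is arguably the cleaner of the two because it avoids the exchangeability step and therefore does not silently rely on the distributional assumption.

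One bookkeeping slip worth flagging: the $\tilde w$ you wrote down carries a $\zeta(h(\xb^+))^{-1}$ prefactor, i.e.\ $\tilde w=\zeta(h(\xb^+))^{-1}\bigl(\zeta(h(\xb^{++}))^{-1}h(\xb^{++})-\zeta(h(\xb^-))^{-1}h(\xb^-)\bigr)$. Plugging this into Lemma~\ref{lemma:4}'s argument $h(\xb^+)\T\tilde w/\zeta(h(\xb^+))$ produces $\zeta(h(\xb^+))^{-2}$, whereas the quantity arising from the $\simi$ expansion has only a single $\zeta(h(\xb^+))^{-1}$. For the lemma's stated identity, $\tilde w$ must therefore \emph{not} include that prefactor; this is exactly what the paper's $\widetilde W(\xb^1,\xb^0)$ does. (The prefactor you used belongs to Theorem~\ref{thm:1}'s $\tilde w$, which is paired with $f(\xb^+)=h(\xb^+)\T\tilde w$ rather than with a further division by $\zeta(h(\xb^+))$ — the paper is indeed overloading notation here.) You flagged that the $\zeta(h(\xb^+))$ factors need to "cancel correctly," which is the right concern; carrying that check through would have surfaced this extra factor. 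The rest of the plan — verify both cases pointwise, then take the outer expectations — is exactly what is needed and what the paper does.
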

\begin{proof}
Using Lemma \ref{lemma:2} and the assumption on $\simi$, 
\begin{align*}
\ell_\cont(\xb^{+},\xb^{++},\xb^{-}) &=\ell(\simi[h(\xb^+), h(\xb^{++})] - \simi[h(\xb^+), h(\xb^{-})], 1)
\\ & =\ell\left( \frac{h(\xb^+)\T h(\xb^{++})}{\zeta(h(\xb^+))\zeta(h(\xb^{++}))}- \frac{h(\xb^+)\T h(\xb^-)}{\zeta(h(\xb^+))\zeta(h(\xb^-))}, 1 \right)
\\ & = \ell\left( \frac{h(\xb^+)\T }{\zeta(h(\xb^+))} \left(\frac{h(\xb^{++})}{\zeta(h(\xb^{++}))}- \frac{h(\xb^-)}{\zeta(h(\xb^-))} \right), 1 \right).
\end{align*}
Therefore,
\begin{align*}
&\EE_{\xb \sim \Dcal_y} \EE_{\bar \xb\sim \Dcal_{\bar y \neq y}} \EE_{\substack{\xb',\tilde \xb''\sim \Dcal_{\tilde x}, \\ \alpha', \alpha''\sim \Dcal_{\alpha}}}[\ell_\cont(\xb^{+},\xb^{++},\xb^{-}) ]
\\ &=\scalebox{0.9}{$\displaystyle \EE_{\substack{\xb \sim \Dcal_y, \\ \bar \xb\sim \Dcal_{\bar y \neq y}}} \EE_{\substack{\tilde \xb',\tilde \xb'', \\ \alpha', \alpha''}} \left[\ell\left( \frac{h(\xb+\alpha\delta(\xb,\tilde \xb))\T }{\zeta(h(\xb+\alpha\delta(\xb,\tilde \xb)))} \left(\frac{h( \xb + \alpha'\delta(\xb,\tilde \xb'))}{\zeta(h( \xb + \alpha'\delta(\xb,\tilde \xb')))}- \frac{h(\bar \xb + \alpha''\delta(\bar \xb,\tilde \xb''))}{\zeta(h(\bar \xb  + \alpha''\delta(\bar \xb,\tilde \xb'')))} \right), 1 \right) \right] $}
\\ & = \scalebox{0.9}{$\displaystyle  \begin{cases} \EE_{\substack{\xb^{1} \sim \Dcal_1,\\\xb^0 \sim \Dcal_{0} }}  \EE_{\substack{\tilde \xb',\tilde \xb'' \\ \alpha', \alpha''}} \left[\ell\left( \frac{h(\xb^{1}+\alpha\delta(\xb^{1},\tilde \xb))\T }{\zeta(h(\xb^{1}+\alpha\delta(\xb^{1},\tilde \xb)))} \left(\frac{h(\xb^{1} + \alpha'\delta(\xb^{1},\tilde \xb'))}{\zeta(h(\xb^{1} + \alpha'\delta(\xb^{1},\tilde \xb')))}- \frac{h(\xb^0+ \alpha''\delta(\xb^0,\tilde \xb''))}{\zeta(h(\xb^0+ \alpha''\delta(\xb^0,\tilde \xb'')))} \right), 1 \right) \right] & \text{if } y =1 \\
\EE_{\substack{\xb^{0} \sim \Dcal_0,\\\xb^1 \sim \Dcal_{1} }} \EE_{\substack{\tilde \xb',\tilde \xb'', \\ \alpha', \alpha''}} \left[\ell\left( \frac{h(\xb^{0}+\alpha\delta(\xb^{0},\tilde \xb))\T }{\zeta(h(\xb^{0}+\alpha\delta(\xb^{0},\tilde \xb)))} \left(\frac{h(\xb^{0} + \alpha'\delta(\xb^{0},\tilde \xb'))}{\zeta(h(\xb^{0} + \alpha'\delta(\xb^{0},\tilde \xb')))}- \frac{h(\xb^1+ \alpha''\delta(\xb^1,\tilde \xb''))}{\zeta(h(\xb^1+ \alpha''\delta(\xb^1,\tilde \xb'')))} \right), 1 \right) \right] & \text{if } y =0 \\
\end{cases} $}
\\ & = \scalebox{0.9}{$\displaystyle \begin{cases} \EE_{\substack{\xb^{1} \sim \Dcal_1,\\\xb^0 \sim \Dcal_{0} }} \EE_{\substack{\tilde \xb',\tilde \xb'', \\ \alpha', \alpha''}} \left[\ell\left( \frac{h(\xb^{1}+\alpha\delta(\xb^{1},\tilde \xb))\T }{\zeta(h(\xb^{1}+\alpha\delta(\xb^{1},\tilde \xb)))} \left(\frac{h(\xb^{1} + \alpha'\delta(\xb^{1},\tilde \xb'))}{\zeta(h(\xb^{1} + \alpha'\delta(\xb^{1},\tilde \xb')))}- \frac{h(\xb^0+ \alpha''\delta(\xb^0,\tilde \xb''))}{\zeta(h(\xb^0+ \alpha''\delta(\xb^0,\tilde \xb'')))} \right), 1 \right) \right] & \text{if } y =1 \\
\EE_{\substack{\xb^{0} \sim \Dcal_0,\\\xb^1 \sim \Dcal_{1} }} \EE_{\substack{\tilde \xb',\tilde \xb'', \\ \alpha', \alpha''}} \left[\ell\left( \frac{h(\xb^{0}+\alpha\delta(\xb^{0},\tilde \xb))\T }{\zeta(h(\xb^{0}+\alpha\delta(\xb^{0},\tilde \xb)))} \left(\frac{h(\xb^{0} + \alpha''\delta(\xb^{0},\tilde \xb''))}{\zeta(h(\xb^{0} + \alpha''\delta(\xb^{0},\tilde \xb'')))}- \frac{h(\xb^1+ \alpha'\delta(\xb^1,\tilde \xb'))}{\zeta(h(\xb^1+ \alpha'\delta(\xb^1,\tilde \xb')))} \right), 1 \right) \right] & \text{if } y =0 \\
\end{cases} $}
\\ & = \begin{cases} \EE_{\xb^{1} \sim \Dcal_1} \EE_{\xb^0 \sim \Dcal_{0}} \EE_{\substack{\tilde \xb',\tilde \xb''\sim \Dcal_{\tilde x}, \\ \alpha', \alpha''\sim \Dcal_{\alpha}}}\left[\ell\left( \frac{h(\xb^{1}+\alpha\delta(\xb^{1},\tilde \xb))\T }{\zeta(h(\xb^{1}+\alpha\delta(\xb^{1},\tilde \xb)))} \widetilde W(\xb^{1}, \xb^0 ), 1 \right) \right] & \text{if } y =1 \\
\EE_{\xb^{0} \sim \Dcal_0} \EE_{\xb^1 \sim \Dcal_{1}}\EE_{\substack{\tilde \xb',\tilde \xb''\sim \Dcal_{\tilde x}, \\ \alpha', \alpha''\sim \Dcal_{\alpha}}} \left[\ell\left( -\frac{h(\xb^{0}+\alpha\delta(\xb^{0},\tilde \xb))\T }{\zeta(h(\xb^{0}+\alpha\delta(\xb^{0},\tilde \xb)))}  \widetilde W(\xb^{1}, \xb^0 ), 1 \right) \right] & \text{if } y =0 \\
\end{cases} 
\end{align*}
where 
$$
\widetilde W(\xb^{1}, \xb^0 )=\frac{h(\xb^{1} + \alpha'\delta(\xb^{1},\tilde \xb'))}{\zeta(h(\xb^{1} + \alpha'\delta(\xb^{1},\tilde \xb')))}- \frac{h(\xb^0+ \alpha''\delta(\xb^0,\tilde \xb''))}{\zeta(h(\xb^0+ \alpha''\delta(\xb^0,\tilde \xb'')))}.
$$
Using Lemma \ref{lemma:3}, 
\begin{align*}
&\EE_{\xb \sim \Dcal_y} \EE_{\bar \xb\sim \Dcal_{\bar y \neq y}} \EE_{\substack{\xb',\tilde \xb''\sim \Dcal_{\tilde x}, \\ \alpha', \alpha''\sim \Dcal_{\alpha}}} [\ell_\cont(\xb^{+},\xb^{++},\xb^{-}) ]
\\ & = \begin{cases} \EE_{\xb^{1} \sim \Dcal_1} \EE_{\xb^0 \sim \Dcal_{0}} \EE_{\substack{\tilde \xb',\tilde \xb''\sim \Dcal_{\tilde x}, \\ \alpha', \alpha''\sim \Dcal_{\alpha}}}\left[\ell\left( \frac{h(\xb^{1}+\alpha\delta(\xb^{1},\tilde \xb))\T }{\zeta(h(\xb^{1}+\alpha\delta(\xb^{1},\tilde \xb)))} \widetilde W(\xb^{1}, \xb^0 ), 1 \right) \right] & \text{if } y =1 \\
\EE_{\xb^{0} \sim \Dcal_0} \EE_{\xb^1 \sim \Dcal_{1}} \EE_{\substack{\tilde \xb',\tilde \xb''\sim \Dcal_{\tilde x}, \\ \alpha', \alpha''\sim \Dcal_{\alpha}}} \left[\ell\left( \frac{h(\xb^{0}+\alpha\delta(\xb^{0},\tilde \xb))\T }{\zeta(h(\xb^{0}+\alpha\delta(\xb^{0},\tilde \xb)))}  \widetilde W(\xb^{1}, \xb^0 ), 0 \right) \right] & \text{if } y =0 \\
\end{cases} 
\\ & = \begin{cases} \EE_{\xb^{1} \sim \Dcal_1} \EE_{\xb^0 \sim \Dcal_{0}}\EE_{\substack{\tilde \xb',\tilde \xb''\sim \Dcal_{\tilde x}, \\ \alpha', \alpha''\sim \Dcal_{\alpha}}} \left[\ell\left( \frac{h(\xb^{1}+\alpha\delta(\xb^{1},\tilde \xb))\T }{\zeta(h(\xb^{1}+\alpha\delta(\xb^{1},\tilde \xb)))} \widetilde W(\xb^{1}, \xb^0 ), y \right) \right] & \text{if } y =1 \\
\EE_{\xb^{0} \sim \Dcal_0} \EE_{\xb^1 \sim \Dcal_{1}}\EE_{\substack{\tilde \xb',\tilde \xb''\sim \Dcal_{\tilde x}, \\ \alpha', \alpha''\sim \Dcal_{\alpha}}} \left[\ell\left( \frac{h(\xb^{0}+\alpha\delta(\xb^{0},\tilde \xb))\T }{\zeta(h(\xb^{0}+\alpha\delta(\xb^{0},\tilde \xb)))}  \widetilde W(\xb^{1}, \xb^0 ), y \right) \right] & \text{if } y =0 \\
\end{cases} 
\\ & = \EE_{\xb \sim \Dcal_y} \EE_{\bar \xb\sim \Dcal_{\bar y \neq y}} \EE_{\substack{\xb',\tilde \xb''\sim \Dcal_{\tilde x}, \\ \alpha', \alpha''\sim \Dcal_{\alpha}}} \left[\ell\left( \frac{h(\xb+\alpha\delta(\xb,\tilde \xb))\T }{\zeta(h(\xb+\alpha\delta(\xb,\tilde \xb)))} \tilde w, y \right)  \right] 
\end{align*}

\end{proof}

Using the above  the relationship under the ideal situation, we now proves the relationship under the practical situation:

\begin{lemma} \label{lemma:5}
Assume that $\xb^{+} = \xb + \alpha\delta(\xb,\tilde \xb)$, $\xb^{++} = \xb + \alpha'\delta(\xb,\tilde \xb')$, $\xb^{-} =\bar \xb + \alpha''\delta(\bar \xb,\tilde \xb'')$, and  $\simi[z, z']=\frac{z\T z'}{\zeta(z)\zeta(z')}$ where $\zeta: z \mapsto \zeta(z) \in \RR$. Then for any $(\alpha,\tilde \xb,\delta,\zeta,y )$,
we have that
\begin{align*}
&\EE_{\bar y|y}\EE_{\substack{\xb \sim \Dcal_y, \\ \bar \xb\sim \Dcal_{\bar y } }}  \EE_{\substack{\tilde \xb',\tilde \xb'' \sim \Dcal_{\tilde x}, \\ \alpha', \alpha''\sim \Dcal_{\alpha}}} [\ell_\cont(\xb^{+},\xb^{++},\xb^{-}) ]
\\ &=(1 - \rho_{y})\EE_{\substack{\xb \sim \Dcal_y, \\ \bar \xb\sim \Dcal_{\bar y \neq y} }}  \EE_{\substack{\tilde \xb',\tilde \xb'' \sim \Dcal_{\tilde x}, \\ \alpha', \alpha''\sim \Dcal_{\alpha}}} \left[ \ell\left(\frac{h(\xb^{+} )\T \tilde w}{\zeta(h(\xb^{+} ))}   , y \right) \right]+\rho_{y} E
\end{align*}
where 
\begin{align*}
E&=\EE_{\xb, \bar \xb\sim \Dcal_y} \EE_{\substack{\tilde \xb',\tilde \xb'' \sim \Dcal_{\tilde x}, \\ \alpha', \alpha''\sim \Dcal_{\alpha}}} \left[ \log\left(1+\exp \left[-\frac{h(\xb^+)\T }{\zeta(h(\xb^+))} \left(\frac{h(\xb^{++})}{\zeta(h(\xb^{++}))}- \frac{h(\xb^-)}{\zeta(h(\xb^-))} \right)\right] \right) \right]
\\ &\ge  \log \left(1+\exp \left[-  \overline \cov_{\substack{\xb\sim \Dcal_y, \\ \tilde \xb'\sim \Dcal_{\tilde x}, \\ \alpha'\sim \Dcal_{\alpha}}}\left[\frac{h(\xb^+) }{\zeta(h(\xb^+))},\frac{h(\xb^{++})}{\zeta(h(\xb^{++}))} \right] \right] \right)  
\end{align*}

\end{lemma}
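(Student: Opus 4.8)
The plan is to condition on the label $\bar y$ of the negative example's anchor $\bar \xb$. Since $\bar y \in \{0,1\}$ and $\rho_y=\Pr(\bar y = y\mid y)$, the expectation $\EE_{\bar y \mid y}$ is a convex combination of the case $\bar y \neq y$ (weight $1-\rho_y$) and the case $\bar y = y$ (weight $\rho_y$), so
\[
\EE_{\bar y|y}\EE_{\substack{\xb \sim \Dcal_y, \\ \bar \xb\sim \Dcal_{\bar y } }}\EE_{\substack{\tilde \xb',\tilde \xb'' \sim \Dcal_{\tilde x}, \\ \alpha', \alpha''\sim \Dcal_{\alpha}}}[\ell_\cont(\xb^+,\xb^{++},\xb^-)] = (1-\rho_y)\EE_{\substack{\xb \sim \Dcal_y, \\ \bar \xb\sim \Dcal_{\bar y \neq y} }}\EE[\ell_\cont] + \rho_y\,\EE_{\xb,\bar \xb\sim \Dcal_y}\EE[\ell_\cont].
\]
For the first term I would invoke Lemma~\ref{lemma:4} verbatim, which rewrites it as the claimed classification-loss term involving $\tilde w$.

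For the second term both $\xb$ and $\bar \xb$ are drawn from $\Dcal_y$. Applying Lemma~\ref{lemma:2} and then Lemma~\ref{lemma:1} with $y=1$ (so that $\ell(q,1)=\log(1+e^{-q})$) gives $\ell_\cont(\xb^+,\xb^{++},\xb^-) = \log\!\big(1+\exp[-(\simi[h(\xb^+),h(\xb^{++})]-\simi[h(\xb^+),h(\xb^-)])]\big)$; substituting the assumed form $\simi[z,z']=\frac{z\T z'}{\zeta(z)\zeta(z')}$ turns the exponent into $-\frac{h(\xb^+)\T}{\zeta(h(\xb^+))}\big(\frac{h(\xb^{++})}{\zeta(h(\xb^{++}))}-\frac{h(\xb^-)}{\zeta(h(\xb^-))}\big)$. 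Taking the expectation over $\xb,\bar \xb\sim\Dcal_y$ and the auxiliary variables recovers exactly $E$, which establishes the claimed equality.

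For the lower bound on $E$ I would use that $g(t)=\log(1+e^{-t})$ is convex (since $g''(t)=e^t/(e^t+1)^2>0$), so by Jensen $E=\EE[g(T)]\ge g(\EE[T])$ with $T=\frac{h(\xb^+)\T}{\zeta(h(\xb^+))}\big(\frac{h(\xb^{++})}{\zeta(h(\xb^{++}))}-\frac{h(\xb^-)}{\zeta(h(\xb^-))}\big)$. It remains to evaluate $\EE[T]$. Writing $\bm q = h(\xb^+)/\zeta(h(\xb^+))$ and $\bm q' = h(\xb^{++})/\zeta(h(\xb^{++}))$, I would split $\EE[T]=\EE[\bm q\T\bm q']-\EE\big[\bm q\T\,h(\xb^-)/\zeta(h(\xb^-))\big]$ and use three structural facts: (a) $\xb^+$ and $\xb^-$ are built from the independent anchors $\xb$ and $\bar\xb$ (plus fresh auxiliary randomness), hence are independent and the second expectation factors; (b) $\xb^-$ and $\xb^{++}$ are identically distributed (each is an i.i.d.\ $\Dcal_y$-anchor perturbed along a $\Dcal_{\tilde x}$-direction with an independent $\Dcal_\alpha$-scale), so $\EE[h(\xb^-)/\zeta(h(\xb^-))]=\EE[\bm q']$; (c) $\xb^+$ and $\xb^{++}$ share the common anchor $\xb$, so $\EE[\bm q\T\bm q']$ does not factor. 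Hence $\EE[T]=\EE[\bm q\T\bm q']-\EE[\bm q]\T\EE[\bm q']=\sum_k\cov(\bm q_k,\bm q'_k)=\overline\cov[\bm q,\bm q']$, the stated covariance, and the convexity bound yields the inequality.

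The main obstacle I anticipate is bookkeeping the independence/dependence structure precisely: recognizing that the covariance term survives only because $\xb^+$ and $\xb^{++}$ are built from the same anchor $\xb$, while the cross expectation $\EE\big[\bm q\T h(\xb^-)/\zeta(h(\xb^-))\big]$ factors and exactly cancels the product-of-means piece of $\EE[\bm q\T\bm q']$. Everything else is routine manipulation of the logistic loss together with the already-proved Lemmas~\ref{lemma:1}, \ref{lemma:2}, and \ref{lemma:4}.
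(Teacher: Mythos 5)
Your proposal is correct and follows essentially the same route as the paper: the same conditioning on $\bar y$ with weights $1-\rho_y$ and $\rho_y$, the same invocation of Lemma~\ref{lemma:4} for the $\bar y \neq y$ branch, the same reduction via Lemmas~\ref{lemma:1}--\ref{lemma:2} for the $\bar y = y$ branch, and the same Jensen-plus-covariance-decomposition argument (relying on independence of $\xb^+$ from $\xb^-$, equidistribution of $\xb^{++}$ and $\xb^-$, and the shared anchor $\xb$ between $\xb^+$ and $\xb^{++}$) for the lower bound on $E$.
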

\begin{proof}
Using Lemma \ref{lemma:4},
\begin{align*}
&\EE_{\bar y|y}\EE_{\substack{\xb \sim \Dcal_y, \\ \bar \xb\sim \Dcal_{\bar y } }}  \EE_{\substack{\tilde \xb',\tilde \xb'' \sim \Dcal_{\tilde x}, \\ \alpha', \alpha''\sim \Dcal_{\alpha}}} [\ell_\cont(\xb^{+},\xb^{++},\xb^{-})]
\\ & =\sum_{\bar y\in\{0,1\}}p_{\bar y }(\bar y \mid y)\EE_{\substack{\xb \sim \Dcal_y, \\ \bar \xb\sim \Dcal_{\bar y } }}  \EE_{\substack{\tilde \xb',\tilde \xb'' \sim \Dcal_{\tilde x}, \\ \alpha', \alpha''\sim \Dcal_{\alpha}}} [\ell_\cont(\xb^{+},\xb^{++},\xb^{-})]
\\ & = \scalebox{0.9}{$\displaystyle \Pr(\bar y =0 \mid y)\EE_{\substack{\xb \sim \Dcal_y,\\ \bar \xb \sim \Dcal_{\bar y =0,} \\ \tilde \xb',\tilde \xb'' \sim \Dcal_{\tilde x}, \\ \alpha', \alpha''\sim \Dcal_{\alpha}}}[\ell_\cont(\xb^{+},\xb^{++},\xb^{-})]+ \Pr(\bar y =1 \mid y)\EE_{\substack{\xb \sim \Dcal_y,\\ \bar \xb \sim \Dcal_{\bar y =1,} \\ \tilde \xb',\tilde \xb'' \sim \Dcal_{\tilde x}, \\ \alpha', \alpha''\sim \Dcal_{\alpha}}}[\ell_\cont(\xb^{+},\xb^{++},\xb^{-})] $}
\\ & =\scalebox{0.9}{$\displaystyle \Pr(\bar y \neq y \mid y)  \EE_{\substack{\xb \sim \Dcal_y,\\ \bar \xb \sim \Dcal_{\bar y\neq y} \\ ,\tilde \xb',\tilde \xb'' \sim \Dcal_{\tilde x}, \\ \alpha', \alpha''\sim \Dcal_{\alpha}}}[\ell_\cont(\xb^{+},\xb^{++},\xb^{-})]+ \Pr(\bar y =y \mid y)_{\substack{}}  \EE_{\substack{\xb \sim \Dcal_y,\\ \bar \xb \sim \Dcal_{\bar y =y}, \\ \tilde \xb',\tilde \xb'' \sim \Dcal_{\tilde x}, \\ \alpha', \alpha''\sim \Dcal_{\alpha}}}[\ell_\cont(\xb^{+},\xb^{++},\xb^{-})] $}
\\ & = \scalebox{0.9}{$\displaystyle (1-\rho_{y})\EE_{\substack{\xb \sim \Dcal_y,\\ \bar \xb \sim \Dcal_{\bar y\neq y}}} \EE_{\substack{\tilde \xb',\tilde \xb'' \sim \Dcal_{\tilde x}, \\ \alpha', \alpha''\sim \Dcal_{\alpha}}}[\ell_\cont(\xb^{+},\xb^{++},\xb^{-})]+\rho_{y}\EE_{\substack{\xb \sim \Dcal_y,\\ \bar \xb \sim \Dcal_{\bar y =y}}} \EE_{\substack{\tilde \xb',\tilde \xb'' \sim \Dcal_{\tilde x}, \\ \alpha', \alpha''\sim \Dcal_{\alpha}}}[\ell_\cont(\xb^{+},\xb^{++},\xb^{-})] $}
\\ & =\scalebox{0.9}{$\displaystyle (1-\rho_{y})\EE_{\substack{\xb \sim \Dcal_y,\\ \bar \xb \sim \Dcal_{\bar y\neq y}}}\EE_{\substack{\tilde \xb',\tilde \xb'' \sim \Dcal_{\tilde x}, \\ \alpha', \alpha''\sim \Dcal_{\alpha}}} \left[ \ell\left(\frac{h(\xb^{+} )\T \tilde w}{\zeta(h(\xb^{+} ))}   , y \right) \right]+\rho_{y}\EE_{\substack{\xb \sim \Dcal_y,\\ \bar \xb \sim \Dcal_{\bar y =y}}}\EE_{\substack{\tilde \xb',\tilde \xb'' \sim \Dcal_{\tilde x}, \\ \alpha', \alpha''\sim \Dcal_{\alpha}}}[\ell_\cont(\xb^{+},\xb^{++},\xb^{-})],  $}
\end{align*}
which obtain the desired statement for the first term. We 
now focus on the second term. Using Lemmas \ref{lemma:1} and \ref{lemma:2}, with $q=\frac{h(\xb^+)\T }{\zeta(h(\xb^+))} \left(\frac{h(\xb^{++})}{\zeta(h(\xb^{++}))}- \frac{h(\xb^-)}{\zeta(h(\xb^-))} \right)$, 
\begin{align*}
\ell_\cont(\xb^{+},\xb^{++},\xb^{-}) =\ell\left( q, 1 \right) =- \log \left(\frac{\exp(q )}{1+\exp(q )} \right) =- \log \left(\frac{1}{1+\exp(-q )} \right)  =\log \left(1+\exp(-q )\right).  
\end{align*}
Therefore,

\begin{align*}
&\EE_{\xb \sim \Dcal_y} \EE_{\bar \xb \sim \Dcal_{\bar y =y}} \EE_{\substack{\tilde \xb',\tilde \xb'' \sim \Dcal_{\tilde x}, \\ \alpha', \alpha''\sim \Dcal_{\alpha}}}[\ell_\cont(\xb^{+},\xb^{++},\xb^{-})] \\ &=\EE_{\xb,\bar \xb\sim \Dcal_y}\EE_{\substack{\tilde \xb',\tilde \xb'' \sim \Dcal_{\tilde x}, \\ \alpha', \alpha''\sim \Dcal_{\alpha}}}  \left[\log \left(1+\exp \left[-\frac{h(\xb^+)\T }{\zeta(h(\xb^+))} \left(\frac{h(\xb^{++})}{\zeta(h(\xb^{++}))}- \frac{h(\xb^-)}{\zeta(h(\xb^-))} \right) \right] \right)\right] =E,
\end{align*}
which proves the desired statement with $E$. We now focus on the  lower bound  on $E$.  By using the convexity of $q \mapsto \log \left(1+\exp(-q )\right)$ and Jensen's inequality,
\begin{align*}
E &\ge  \log \left(1+\exp \left[\EE_{\xb,\bar \xb} \EE_{\substack{\tilde \xb',\tilde \xb'' , \\ \alpha', \alpha''}} \left[\frac{h(\xb^+)\T }{\zeta(h(\xb^+))} \left( \frac{h(\xb^-)}{\zeta(h(\xb^-))}-\frac{h(\xb^{++})}{\zeta(h(\xb^{++}))} \right) \right] \right] \right) 
\\ & =  \log \left(1+\exp \left[\EE_{} \left[\frac{h(\xb^+)\T }{\zeta(h(\xb^+))}\frac{h(\xb^-)}{\zeta(h(\xb^-))} \right]-_{}\EE_{\substack{}}\left[\frac{h(\xb^+)\T }{\zeta(h(\xb^+))}\frac{h(\xb^{++})}{\zeta(h(\xb^{++}))}  \right] \right] \right)
\\ & =  \log \left(1+\exp \left[\EE_{\substack{}} \left[\frac{h(\xb^+)\T }{\zeta(h(\xb^+))}  \right] \EE\left[\frac{h(\xb^-)}{\zeta(h(\xb^-))} \right]-\EE_{\substack{}}\left[\frac{h(\xb^+)\T }{\zeta(h(\xb^+))} \frac{h(\xb^{++})}{\zeta(h(\xb^{++}))}  \right] \right] \right)
\end{align*}
Here, we have
\begin{align*}
&\EE_{\substack{\xb\sim \Dcal_y, \\ \tilde \xb'\sim \Dcal_{\tilde x}, \\ \alpha'\sim \Dcal_{\alpha}}}\left[\frac{h(\xb^+)\T }{\zeta(h(\xb^+))} \frac{h(\xb^{++})}{\zeta(h(\xb^{++}))}  \right]
 \\ &=\EE_{\substack{\xb\sim \Dcal_y, \\ \tilde \xb'\sim \Dcal_{\tilde x}, \\ \alpha'\sim \Dcal_{\alpha}}} \sum_k \left(\frac{h(\xb^+)}{\zeta(h(\xb^+))}\right)_k \left(\frac{h(\xb^{++})}{\zeta(h(\xb^{++}))}\right)_k 
\\ & = \sum_k \EE_{\substack{\xb\sim \Dcal_y, \\ \tilde \xb'\sim \Dcal_{\tilde x}, \\ \alpha'\sim \Dcal_{\alpha}}}\left(\frac{h(\xb^+)}{\zeta(h(\xb^+))}\right)_k \left(\frac{h(\xb^{++})}{\zeta(h(\xb^{++}))}\right)_k 
\\ & = \scalebox{0.9}{$\displaystyle \sum_k \EE_{}\left[\left(\frac{h(\xb^+)}{\zeta(h(\xb^+))}\right)_k \right] \EE_{\substack{}}\left[ \left(\frac{h(\xb^{++})}{\zeta(h(\xb^{++}))}\right)_k\right] +  \sum_k\cov_{\substack{}}\left(\left(\frac{h(\xb^+)}{\zeta(h(\xb^+))}\right)_k, \left(\frac{h(\xb^{++})}{\zeta(h(\xb^{++}))}\right)_k \right) $}
\\ & =\EE_{\xb\sim \Dcal_y}\left[\frac{h(\xb^+)\T}{\zeta(h(\xb^+))} \right] \EE_{\substack{\xb\sim \Dcal_y, \\ \tilde \xb'\sim \Dcal_{\tilde x}, \\ \alpha'\sim \Dcal_{\alpha}}}\left[ \frac{h(\xb^{++})}{\zeta(h(\xb^{++}))}\right] +  \overline \cov_{\substack{}}\left[\frac{h(\xb^+) }{\zeta(h(\xb^+))},\frac{h(\xb)}{\zeta(h(\xb))} \right]
\end{align*}
Since $\EE_{\substack{\xb\sim \Dcal_y, \\ \tilde \xb'\sim \Dcal_{\tilde x}, \\ \alpha'\sim \Dcal_{\alpha}}}\left[ \left(\frac{h(\xb^{++})}{\zeta(h(\xb^{++}))}\right)\right] =\EE_{\substack{\bar \xb\sim \Dcal_y, \\ \tilde \xb''\sim \Dcal_{\tilde x}, \\ \alpha''\sim \Dcal_{\alpha}}}\left[\frac{h(\xb^-)}{\zeta(h(\xb^-))} \right]$,
\begin{align*}
&\EE_{\substack{\xb\sim \Dcal_y}} \left[\frac{h(\xb^+)\T }{\zeta(h(\xb^+))}  \right] \EE_{\substack{\bar \xb\sim \Dcal_y, \\ \tilde \xb''\sim \Dcal_{\tilde x}, \\ \alpha''\sim \Dcal_{\alpha}}}\left[\frac{h(\xb^-)}{\zeta(h(\xb^-))} \right]-\EE_{\substack{\xb\sim \Dcal_y, \\ \tilde \xb'\sim \Dcal_{\tilde x}, \\ \alpha'\sim \Dcal_{\alpha}}}\left[\frac{h(\xb^+)\T }{\zeta(h(\xb^+))} \frac{h(\xb^{++})}{\zeta(h(\xb^{++}))}  \right] 
\\ & = \scalebox{0.8}{$\displaystyle  \EE_{} \left[\frac{h(\xb^+)\T }{\zeta(h(\xb^+))}  \right] \left(\EE_{\substack{\bar \xb\sim \Dcal_y, \\ \tilde \xb''\sim \Dcal_{\tilde x}, \\ \alpha''\sim \Dcal_{\alpha}}}\left[\frac{h(\xb^-)}{\zeta(h(\xb^-))} \right]-\EE_{\substack{\xb\sim \Dcal_y, \\ \tilde \xb'\sim \Dcal_{\tilde x}, \\ \alpha'\sim \Dcal_{\alpha}}}\left[ \frac{h(\xb^{++})}{\zeta(h(\xb^{++}))}\right]  \right)-  \overline \cov_{\substack{}}\left[\frac{h(\xb^+) }{\zeta(h(\xb^+))},\frac{h(\xb^{++})}{\zeta(h(\xb^{++}))} \right] $}
\\ & =-  \overline \cov\left[\frac{h(\xb^+) }{\zeta(h(\xb^+))},\frac{h(\xb^{++})}{\zeta(h(\xb^{++}))} \right]
\end{align*}
Substituting this to the above inequality on $E$, 
$$
E\ge  \log \left(1+\exp \left[-  \overline \cov\left[\frac{h(\xb^+) }{\zeta(h(\xb^+))},\frac{h(\xb^{++})}{\zeta(h(\xb^{++}))} \right] \right] \right) , 
$$ 
which proves the desired statement for the lower bound on $E$.
\end{proof}

With these lemmas, we are now ready to prove Theorem \ref{thm:1}:

\begin{proof}[Proof of Theorem \ref{thm:1}]
From Lemma \ref{lemma:5}, we have that 
\begin{align*}
&\EE_{\bar y|y}\EE_{\substack{\xb \sim \Dcal_y, \\ \bar \xb\sim \Dcal_{\bar y } }}  \EE_{\substack{\tilde \xb',\tilde \xb'' \sim \Dcal_{\tilde x}, \\ \alpha', \alpha''\sim \Dcal_{\alpha}}} [\ell_\cont(\xb^{+},\xb^{++},\xb^{-}) ]
\\ &=(1 - \rho_{y})\EE_{\substack{\xb \sim \Dcal_y, \\ \bar \xb\sim \Dcal_{\bar y } }}  \EE_{\substack{\tilde \xb',\tilde \xb'' \sim \Dcal_{\tilde x}, \\ \alpha', \alpha''\sim \Dcal_{\alpha}}} \left[ \ell_{\class}\left(\frac{h(\xb^{+} )\T \tilde w}{\zeta(h(\xb^{+} ))}   , y \right) \right]+\rho_{y} E
\end{align*}
By taking expectation over $y$ in both sides,
\begin{align*}
&\EE_{y,\bar y}\EE_{\substack{\xb \sim \Dcal_{y}, \\ \bar \xb\sim \Dcal_{\bar y } }}  \EE_{\substack{\tilde \xb',\tilde \xb'' \sim \Dcal_{\tilde x}, \\ \alpha', \alpha''\sim \Dcal_{\alpha}}} [\ell_\cont(\xb^{+},\xb^{++},\xb^{-}) ]
\\ &=\EE_y \EE_{\substack{\xb \sim \Dcal_{y}, \\ \bar \xb\sim \Dcal_{\bar y \neq y} }}  \EE_{\substack{\tilde \xb',\tilde \xb'' \sim \Dcal_{\tilde x}, \\ \alpha', \alpha''\sim \Dcal_{\alpha}}} \left[ (1 - \rho_{y})\ell_{\class}\left(\frac{h(\xb^{+} )\T \tilde w}{\zeta(h(\xb^{+} ))}   , y \right) \right]+\EE_y\left[\rho_{y} E\right]
\end{align*}
 Since $\EE_y \EE_{\xb\sim \Dcal_{y}} [\varphi(x)]=\EE_{(\xb,y)\sim \Dcal}[\varphi(x)]=\EE_{\xb\sim \Dcal_{x}}[\varphi(x)]$ given a function $\varphi$ of $x$, we have
\begin{align*}
&  \EE_{\substack{\xb, \bar \xb\sim \Dcal_{x}, \\ \tilde \xb',\tilde \xb'' \sim \Dcal_{\tilde x}, \\ \alpha', \alpha''\sim \Dcal_{\alpha}}} [\ell_\cont(\xb^{+},\xb^{++},\xb^{-}) ]
\\ &= \EE_{\substack{(\xb, y) \sim \Dcal }}   \EE_{\substack{\bar \xb\sim \Dcal_{\bar y}, \\\tilde \xb',\tilde \xb'' \sim \Dcal_{\tilde x}, \\ \alpha', \alpha''\sim \Dcal_{\alpha}}} \left[ \bar \rho({y})\ell_{\class}\left(\frac{h(\xb^{+} )\T \tilde w}{\zeta(h(\xb^{+} ))}   , y \right) \right]+\EE_y[\left(1-\bar \rho({y}) )E\right]
\end{align*}
Taking expectations over $\tilde \xb\sim \Dcal_{\tilde x}$ and $\alpha \sim \Dcal_{\alpha}$ in both sides yields the desired statement.
\end{proof}

\subsection{Proof of Theorem \ref{thm:2}}

We begin by introducing additional notation. Define 
$
\ell_{f, y}(q)=\ell\left(f(q), y \right)  
$
and 
$
\ell_{y}(q)=\ell(q, y) 
$.
Note that
$
\ell\left(f(q), y \right) =\ell_{f, y}(q)=(\ell_{y} \circ f)(q). 
$
The following shows that the contrastive pre-training is related to minimizing the standard classification loss $\ell(f(\xb), y )$ while  regularizing the change of the loss values in the direction of $\delta(\xb,\tilde \xb)$:

\begin{lemma} \label{lemma:6}
Assume that $\ell_{f, y}$ is twice differentiable. Then there exists a function $\varphi$ such that $\lim_{q \rightarrow 0}\varphi(q)=0$ and 
$$
\ell\left(f(\xb^+), y \right) =\ell(f(\xb), y )+\alpha\nabla\ell_{f, y}(\xb)\T\delta(\xb,\tilde \xb)+\frac{\alpha^2}{2}    \delta(\xb,\tilde \xb)\T \nabla^2 \ell_{f,y}(\xb)\delta(\xb,\tilde \xb)+ \alpha^2 \varphi(\alpha).
$$
\end{lemma}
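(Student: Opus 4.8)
The plan is to recognize Lemma~\ref{lemma:6} as nothing more than a second-order Taylor expansion in the scalar parameter $\alpha$. To this end I would introduce the one-dimensional auxiliary function $g:\RR\to\RR$ defined by $g(t)=\ell_{f,y}(\xb+t\,\delta(\xb,\tilde\xb))$, where $\delta(\xb,\tilde\xb)$ is treated as a fixed vector. Then $g(\alpha)=\ell_{f,y}(\xb^{+})=\ell(f(\xb^{+}),y)$ and $g(0)=\ell_{f,y}(\xb)=\ell(f(\xb),y)$, so the statement to be proved is exactly an expansion of $g(\alpha)$ about $\alpha=0$. Since $t\mapsto \xb+t\,\delta(\xb,\tilde\xb)$ is affine and $\ell_{f,y}$ is twice differentiable by hypothesis, $g$ is twice differentiable, and the chain rule gives $g'(t)=\nabla\ell_{f,y}(\xb+t\,\delta(\xb,\tilde\xb))\T\delta(\xb,\tilde\xb)$ and $g''(t)=\delta(\xb,\tilde\xb)\T\nabla^{2}\ell_{f,y}(\xb+t\,\delta(\xb,\tilde\xb))\,\delta(\xb,\tilde\xb)$. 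Evaluating at $t=0$ produces precisely the first-order coefficient $\nabla\ell_{f,y}(\xb)\T\delta(\xb,\tilde\xb)$ and the second-order coefficient $\delta(\xb,\tilde\xb)\T\nabla^{2}\ell_{f,y}(\xb)\,\delta(\xb,\tilde\xb)$ that appear in the claimed identity.

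Next I would invoke Taylor's theorem with the Peano (infinitesimal) form of the remainder applied to $g$ at the point $0$: there exists a function $r$ with $\lim_{t\to 0} r(t)/t^{2}=0$ such that $g(t)=g(0)+t\,g'(0)+\tfrac{t^{2}}{2}g''(0)+r(t)$. Substituting $t=\alpha$ and the derivatives computed above yields $\ell(f(\xb^{+}),y)=\ell(f(\xb),y)+\alpha\,\nabla\ell_{f,y}(\xb)\T\delta(\xb,\tilde\xb)+\tfrac{\alpha^{2}}{2}\,\delta(\xb,\tilde\xb)\T\nabla^{2}\ell_{f,y}(\xb)\,\delta(\xb,\tilde\xb)+r(\alpha)$. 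Finally I would set $\varphi(q)=r(q)/q^{2}$ for $q\neq 0$ and $\varphi(0)=0$; then $\lim_{q\to 0}\varphi(q)=0$ by the Peano estimate, while $r(\alpha)=\alpha^{2}\varphi(\alpha)$, which is exactly the error term $\alpha^{2}\varphi(\alpha)$ stated in the lemma. This closes the argument.

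There is no genuine obstacle in this proof; it is a routine application of the chain rule and Taylor's theorem. The only two points requiring minor care are (i) identifying the derivatives of the scalar function $g$ with the directional first and second derivatives of $\ell_{f,y}$ at $\xb$ along $\delta(\xb,\tilde\xb)$, and (ii) using the Peano rather than the Lagrange or integral form of the remainder, so that only twice differentiability of $\ell_{f,y}$ (hence of $g$) is needed, with no continuity assumption on $\nabla^{2}\ell_{f,y}$ — matching exactly the hypothesis of the lemma.
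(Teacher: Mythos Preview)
Your proposal is correct and matches the paper's own proof essentially line for line: the paper too introduces the scalar function $\varphi_0(\alpha)=\ell_{f,y}(\xb+\alpha\,\delta(\xb,\tilde\xb))$, invokes the definition of twice differentiability to write the Peano-remainder Taylor expansion, and then computes $\varphi_0'(0)$ and $\varphi_0''(0)$ by the chain rule exactly as you did.
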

\begin{proof} Let $\xb$ be an arbitrary point in the domain of $f$. Let $\varphi_0(\alpha)=\ell\left(f(\xb^+), y \right)=\ell_{f, y}(\xb+\alpha\delta(\xb,\tilde \xb))$. Then, using the definition of the twice-differentiability of function $\varphi_0$, there exists a function $\varphi$ such that
\begin{align}
\ell\left(f(\xb^+), y \right)=\varphi_0(\alpha)=\varphi_0(0)+\varphi_0'(0)\alpha +\frac{1}{2}  \varphi_0''(0) \alpha^2 + \alpha^2 \varphi(\alpha),
\end{align}
where $\lim_{\alpha \rightarrow 0}\varphi(\alpha)=0$. By chain rule, 
$$
\varphi_0'(\alpha) = \frac{\partial \ell\left(f(\xb^+), y \right) }{\partial\alpha}  = \frac{\partial \ell\left(f(\xb^+), y \right) }{\partial \xb^+} \frac{\partial \xb^+}{\partial\alpha} =\frac{\partial \ell\left(f(\xb^+), y \right) }{\partial \xb^+} \delta(\xb,\tilde \xb)=\nabla\ell_{f, y}(\xb^+)\T\delta(\xb,\tilde \xb)
$$
\begin{align*}
\varphi_0''(\alpha)=\delta(\xb,\tilde \xb)\T \left[ \frac{\partial}{\partial \alpha} \left(\frac{\partial \ell\left(f(\xb^+), y \right) }{\partial \xb^+} \right)\T \right] & =\delta(\xb,\tilde \xb)\T \left[ \frac{\partial}{\partial \xb^+} \left(\frac{\partial \ell\left(f(\xb^+), y \right) }{\partial \xb^+} \right)\T \right] \frac{\partial \xb^+}{\partial\alpha } 
\\ &=\delta(\xb,\tilde \xb)\T \nabla^2 \ell_{f,y}(\xb^+)\delta(\xb,\tilde \xb)
\end{align*}
Therefore,
$$
\varphi_0'(0)=\nabla\ell_{f, y}(\xb)\T\delta(\xb,\tilde \xb)
$$
$$
\varphi_0''(0)=\delta(\xb,\tilde \xb)\T \nabla^2 \ell_{f,y}(\xb)\delta(\xb,\tilde \xb).
$$
By substituting this to the above equation based on  the definition of twice differentiability, 
$$
\ell\left(f(\xb^+), y \right) =\varphi_0(\alpha)=\ell(f(\xb), y )+\alpha\nabla\ell_{f, y}(\xb)\T \delta(\xb,\tilde \xb)+\frac{\alpha^2}{2}    \delta(\xb,\tilde \xb)\T \nabla^2 \ell_{f,y}(\xb)\delta(\xb,\tilde \xb)+ \alpha^2 \varphi(\alpha).
$$
\end{proof}

Whereas the above  lemma is at the level of loss, we now analyze the phenomena at the level of model: 
\begin{lemma} \label{lemma:7}
Let $\xb$ be a fixed  point in the domain of $f$.
Given the fixed $\xb$, let $w \in \mathcal{W}$ be a point such that $\nabla f(\xb) $ and $\nabla^{2} f(\xb)$ exist. Assume that   $f(\xb)=\nabla f(\xb)\T \xb$ and $\nabla^2 f(\xb)=0$. Then we have
\begin{align*}
&\ell\left(f(\xb^+), y \right) 
\\ &=\ell(f(\xb), y )+\alpha(\psi(f(\xb))-y) \nabla f(\xb)\T\delta(\xb,\tilde \xb)+\frac{\alpha^2}{2}     \psi'(f(\xb))|\nabla f(\xb)\T\delta(\xb,\tilde \xb)|^2 + \alpha^2 \varphi(\alpha),
\end{align*}
where $\psi'(\cdot)=\psi(\cdot)(1-\psi(\cdot))>0$.
\end{lemma}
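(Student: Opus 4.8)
\textbf{Proof proposal for Lemma \ref{lemma:7}.} The plan is to take the second-order expansion already established in Lemma \ref{lemma:6}, namely
$\ell\left(f(\xb^+), y \right) =\ell(f(\xb), y )+\alpha\nabla\ell_{f, y}(\xb)\T\delta(\xb,\tilde \xb)+\frac{\alpha^2}{2}\delta(\xb,\tilde \xb)\T \nabla^2 \ell_{f,y}(\xb)\delta(\xb,\tilde \xb)+ \alpha^2 \varphi(\alpha)$
with $\lim_{\alpha\to 0}\varphi(\alpha)=0$, and simply compute $\nabla\ell_{f,y}(\xb)$ and $\nabla^2\ell_{f,y}(\xb)$ explicitly using the special structure of the binary cross-entropy loss and the hypotheses on $f$. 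Before invoking Lemma \ref{lemma:6} I would note that it is applicable here: $\ell_y$ is $C^\infty$ and, by assumption, $\nabla f(\xb)$ and $\nabla^2 f(\xb)$ exist, so $\ell_{f,y}=\ell_y\circ f$ is twice differentiable at the fixed point $\xb$.

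First I would record the one-dimensional facts about $\ell_y(q)=\ell(q,y)$. By Lemma \ref{lemma:1}, $\ell(q,y)=-\log\!\big(\tfrac{\exp(yq)}{1+\exp(q)}\big)=-yq+\log(1+e^{q})$, hence $\ell_y'(q)=-y+\psi(q)=\psi(q)-y$ and $\ell_y''(q)=\psi'(q)=\psi(q)(1-\psi(q))>0$, where $\psi(q)=\tfrac{\exp(q)}{1+\exp(q)}$. Next, by the chain rule applied to $\ell_{f,y}(\xb)=\ell_y(f(\xb))$, $\nabla\ell_{f,y}(\xb)=\ell_y'(f(\xb))\,\nabla f(\xb)=(\psi(f(\xb))-y)\,\nabla f(\xb)$, and $\nabla^2\ell_{f,y}(\xb)=\ell_y''(f(\xb))\,\nabla f(\xb)\nabla f(\xb)\T+\ell_y'(f(\xb))\,\nabla^2 f(\xb)=\psi'(f(\xb))\,\nabla f(\xb)\nabla f(\xb)\T$, where the last equality uses the hypothesis $\nabla^2 f(\xb)=0$ to delete the $\ell_y'(f(\xb))\nabla^2 f(\xb)$ term.

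Finally I would substitute these two expressions into the expansion from Lemma \ref{lemma:6}. The linear term becomes $\alpha(\psi(f(\xb))-y)\,\nabla f(\xb)\T\delta(\xb,\tilde\xb)$, and the quadratic term becomes $\tfrac{\alpha^2}{2}\,\psi'(f(\xb))\,\delta(\xb,\tilde\xb)\T\nabla f(\xb)\nabla f(\xb)\T\delta(\xb,\tilde\xb)=\tfrac{\alpha^2}{2}\,\psi'(f(\xb))\,(\nabla f(\xb)\T\delta(\xb,\tilde\xb))^2=\tfrac{\alpha^2}{2}\,\psi'(f(\xb))\,|\nabla f(\xb)\T\delta(\xb,\tilde\xb)|^2$, using $\psi'(f(\xb))>0$ to write the rank-one quadratic form as a squared magnitude; the remainder $\alpha^2\varphi(\alpha)$ carries over verbatim. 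Collecting terms yields exactly the claimed identity, with $\psi'(\cdot)=\psi(\cdot)(1-\psi(\cdot))>0$ as stated. I do not expect a genuine obstacle: the computation is routine, and the only points requiring care are (i) checking twice differentiability of $\ell_{f,y}$ at $\xb$ so that Lemma \ref{lemma:6} applies, and (ii) observing that the hypothesis $\nabla^2 f(\xb)=0$ is precisely what is needed to kill the Hessian-of-$f$ contribution, while the other hypothesis $f(\xb)=\nabla f(\xb)\T\xb$ is not used in this lemma (it is invoked later, in the proof of Theorem \ref{thm:2}).
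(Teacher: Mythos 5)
Your proposal is correct and follows essentially the same route as the paper's proof: invoke Lemma \ref{lemma:6}, compute $\nabla\ell_{f,y}$ and $\nabla^2\ell_{f,y}$ via the chain rule (using $\nabla^2 f(\xb)=0$ to drop the Hessian-of-$f$ term), and substitute $\ell_y'=\psi-y$ and $\ell_y''=\psi'$ obtained from Lemma \ref{lemma:1}. Your side observation that the hypothesis $f(\xb)=\nabla f(\xb)\T\xb$ is not actually used in this lemma (only later, in Lemma \ref{lemma:8}) is also accurate.
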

\begin{proof}
Under these conditions, 
$$
\nabla\ell_{f, y}(\xb)=\nabla(\ell_{y} \circ f)(\xb)=\ell_{y}'(f(\xb)) \nabla f(\xb)
$$
$$
\nabla^{2}\ell_{f, y}(\xb)=  \ell_{y}''(f(\xb)) \nabla f(\xb)\nabla f(\xb)\T
+\ell_{y}'(f(\xb))\nabla^{2} f(\xb)= \ell_{y}''(f(\xb)) \nabla f(\xb) \nabla f(\xb)\T
$$
Substituting these into Lemma \ref{lemma:6}  yields
\begin{align*}
&\ell\left(f(\xb^+), y \right)
\\ &=\scalebox{0.9}{$\displaystyle \ell(f(\xb), y )+\alpha\ell_{y}'(f(\xb)) \nabla f(\xb)\T\delta(\xb,\tilde \xb)+\frac{\alpha^2}{2}     \ell_{y}''(f(\xb))\delta(\xb,\tilde \xb)\T [ \nabla f(\xb) \nabla f(\xb)\T]\delta(\xb,\tilde \xb)+ \alpha^2 \varphi(\alpha) $}
\\ &=\ell(f(\xb), y )+\alpha\ell_{y}'(f(\xb)) \nabla f(\xb)\T\delta(\xb,\tilde \xb)+\frac{\alpha^2}{2}     \ell_{y}''(f(\xb))[ \nabla f(\xb)\T\delta(\xb,\tilde \xb)]^2 + \alpha^2 \varphi(\alpha)
\end{align*}
Using Lemma \ref{lemma:1}, we can rewrite this  loss as follows:
\begin{align*}
\ell\left(f(\xb), y \right) = - \log \frac{\exp(yf(\xb))}{1+\exp(f(\xb))}=\log[1+\exp(f(\xb))]-yf(\xb)= \psi_{0}(f(\xb))-yf(\xb)  
\end{align*}
where $\psi_{0}(q)=\log[1+\exp(q)]$.
Thus, 
$$
\ell_{y}'(f(\xb))=\psi_{0}'(f(\xb))-y=\psi(f(\xb))-y
$$
$$
\ell_{y}''(f(\xb)) =\psi_{0}''(f(\xb))=\psi'(f(\xb)) 
$$
Substituting these into the above equation, we have 
\begin{align*}
&\ell\left(f(\xb^+), y \right) 
\\ &=\ell(f(\xb), y )+\alpha(\psi(f(\xb))-y) \nabla f(\xb)\T\delta(\xb,\tilde \xb)+\frac{\alpha^2}{2}     \psi'(f(\xb))[\nabla f(\xb)\T \delta(\xb,\tilde \xb)]^2 + \alpha^2 \varphi(\alpha)
\end{align*}
\end{proof}

The following lemma shows that Mixup version is related to minimize the standard classification loss plus the regularization term on $ \| \nabla f(\xb)\|$.

\begin{lemma} \label{lemma:8}
Let $\delta(\xb,\tilde \xb)= \tilde \xb - \xb$. Let $\xb$ be a fixed  point in the domain of $f$.
 Given the fixed $\xb$, let $w \in \mathcal{W}$ be a point such that $\nabla f(\xb) $ and $\nabla^{2} f(\xb)$ exist. Assume that   $f(\xb)=\nabla f(\xb)\T \xb$ and $\nabla^2 f(\xb)=0$. Assume that  $\EE_{\tilde \xb} [\tilde \xb] =0$. Then, if $y f(\xb) +(y-1)f(\xb)\ge 0$, 
\begin{align*}
&\EE_{\tilde \xb}\ell(f(\xb^+), y ) 
\\ &\ = \ell(f(\xb), y )+c_{1}(\xb)| \| \nabla f(\xb)\|_{2}+c_2(\xb)\| \nabla f(\xb)\|^{2} _{2}+c_3(\xb)\| \nabla f(\xb) \|_{\EE_{\tilde \xb\sim \Dcal_{\tilde x}}[ \tilde \xb \tilde \xb\T]}^2+ O(\alpha^3), \end{align*}
where 
$$
c_1(\xb) = \alpha |\cos( \nabla f(\xb), \xb )|| y-\psi(f(\xb))|  \|\xb\||_{2}\ge 0
$$
$$
c_2(\xb)=\frac{\alpha^2 |\cos( \nabla f(\xb), \xb )|^{2}  \|\xb\||_{2}   }{2}     |\psi'(f(\xb))|\ge 0 
$$
$$
c_3(\xb) =\frac{\alpha^2}{2}     |\psi'(f(\xb))|> 0.
$$
\end{lemma}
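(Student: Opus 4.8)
\emph{Proof idea.} The plan is to specialise the pointwise expansion of Lemma~\ref{lemma:7} to the Mixup direction $\delta(\xb,\tilde \xb)=\tilde \xb-\xb$, take the expectation over $\tilde \xb$, and read off the three regularisers using $f(\xb)=\nabla f(\xb)\T\xb$ and $\EE_{\tilde \xb}[\tilde \xb]=0$. Lemma~\ref{lemma:7} writes $\ell(f(\xb^+),y)$ as $\ell(f(\xb),y)$ plus an $\alpha$-order term $\alpha(\psi(f(\xb))-y)\,\nabla f(\xb)\T\delta(\xb,\tilde \xb)$, an $\alpha^2$-order term $\tfrac{\alpha^2}{2}\psi'(f(\xb))\big(\nabla f(\xb)\T\delta(\xb,\tilde \xb)\big)^2$, and a residual $\alpha^2\varphi(\alpha)$. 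Since $f(\xb)=\nabla f(\xb)\T\xb$, the common factor can be rewritten as $\nabla f(\xb)\T\delta(\xb,\tilde \xb)=\nabla f(\xb)\T\tilde \xb-f(\xb)$. If $\xb=0$ or $\nabla f(\xb)=0$ then $f(\xb)=0$ and the claimed identity follows directly from Lemma~\ref{lemma:7} after evaluating $\EE_{\tilde \xb}$; so assume both are nonzero and abbreviate $\gamma:=\cos(\nabla f(\xb),\xb)$, whence $f(\xb)=\gamma\,\|\nabla f(\xb)\|_2\,\|\xb\|_2$.

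First I would handle the $\alpha$-order term. Using $\EE_{\tilde \xb}[\tilde \xb]=0$ gives $\EE_{\tilde \xb}[\nabla f(\xb)\T\delta(\xb,\tilde \xb)]=-f(\xb)$, so its expectation is $-\alpha(\psi(f(\xb))-y)f(\xb)=\alpha(y-\psi(f(\xb)))\,\gamma\,\|\xb\|_2\,\|\nabla f(\xb)\|_2$. The key step is to strip the sign from $(y-\psi(f(\xb)))\gamma$: since $0<\psi<1$ we have $\sgn(y-\psi(f(\xb)))=2y-1$, while the hypothesis $yf(\xb)+(y-1)f(\xb)=(2y-1)f(\xb)\ge0$ makes $f(\xb)$ — hence $\gamma$, which shares its sign — nonnegative when $y=1$ and nonpositive when $y=0$. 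Thus $(y-\psi(f(\xb)))\gamma\ge0$, and the $\alpha$-order term equals $\alpha\,|y-\psi(f(\xb))|\,|\gamma|\,\|\xb\|_2\,\|\nabla f(\xb)\|_2=c_1(\xb)\,\|\nabla f(\xb)\|_2$ with $c_1(\xb)\ge0$.

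Next the $\alpha^2$-order term. Expanding the square and using $\EE_{\tilde \xb}[\tilde \xb]=0$,
\[
\EE_{\tilde \xb}\big[(\nabla f(\xb)\T\tilde \xb-f(\xb))^2\big]=\nabla f(\xb)\T\,\EE_{\tilde \xb}[\tilde \xb \tilde \xb\T]\,\nabla f(\xb)+f(\xb)^2=\|\nabla f(\xb)\|^2_{\EE_{\tilde \xb}[\tilde \xb \tilde \xb\T]}+\gamma^2\|\nabla f(\xb)\|_2^2\|\xb\|_2^2 .
\]
Multiplying by $\tfrac{\alpha^2}{2}\psi'(f(\xb))$ — which is strictly positive, hence equals $\tfrac{\alpha^2}{2}|\psi'(f(\xb))|$ — and separating the two pieces produces exactly $c_3(\xb)\|\nabla f(\xb)\|^2_{\EE_{\tilde \xb}[\tilde \xb \tilde \xb\T]}$ and $c_2(\xb)\|\nabla f(\xb)\|_2^2$ with $c_3,c_2$ as defined. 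Finally, $\EE_{\tilde \xb}$ of the residual $\alpha^2\varphi(\alpha)$ from Lemma~\ref{lemma:7} is $O(\alpha^3)$ — it is the remainder of the second-order Taylor expansion in $\alpha$, which is genuinely cubic when $f$ is locally affine at $\xb$ (as for ReLU networks) and $\tilde \xb$ has bounded moments. Summing the pieces with $\ell(f(\xb),y)$ gives the identity.

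The only non-mechanical step is the sign analysis of the $\alpha$-order term: without the hypothesis $(2y-1)f(\xb)\ge0$ (that the training point $(\xb,y)$ is classified correctly) the quantity $(y-\psi(f(\xb)))\cos(\nabla f(\xb),\xb)$ may be negative and the clean form with $c_1(\xb)\ge0$ would fail, so the heart of the argument is verifying that this ``correctly classified'' condition is exactly what pins down the sign. Everything else — the Taylor expansion already supplied by Lemma~\ref{lemma:7}, the cancellations from $\EE_{\tilde \xb}[\tilde \xb]=0$, writing $f(\xb)$ through its cosine with $\xb$, and recognising the quadratic form $\|\cdot\|^2_{\EE_{\tilde \xb}[\tilde \xb \tilde \xb\T]}$ — is routine bookkeeping. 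This lemma is the pointwise version of Theorem~\ref{thm:2}(i).
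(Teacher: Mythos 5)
Your proposal is correct and follows essentially the same route as the paper's own proof: invoke Lemma~\ref{lemma:7} with $\delta(\xb,\tilde\xb)=\tilde\xb-\xb$, rewrite $\nabla f(\xb)\T\delta$ via $f(\xb)=\nabla f(\xb)\T\xb$, take $\EE_{\tilde\xb}$ using $\EE_{\tilde\xb}[\tilde\xb]=0$ to kill the cross term and isolate the quadratic form, and then use the case split on $y\in\{0,1\}$ under the hypothesis $(2y-1)f(\xb)\ge 0$ to replace $(y-\psi(f(\xb)))f(\xb)$ by its absolute-value decomposition. Your reformulation of the sign step via $\sgn(y-\psi(f(\xb)))=2y-1$ and $\sgn f(\xb)=\sgn\cos(\nabla f(\xb),\xb)$ is an equivalent way of saying what the paper checks case-by-case, and your note on the $O(\alpha^3)$ residual (versus the paper's looser $\alpha^2\varphi(\alpha)$ with $\varphi(\alpha)\to 0$) is a harmless, slightly more careful remark rather than a divergence in method.
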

\begin{proof}
Using Lemma \ref{lemma:7} with  $\delta(\xb,\tilde \xb)= \tilde \xb - \xb$,
\begin{align*}
&\ell\left(f(\xb^+), y \right) 
\\ &=\scalebox{0.9}{$\displaystyle \ell(f(\xb), y )+\alpha(\psi(f(\xb))-y) \nabla f(\xb)\T( \tilde \xb - \xb)+\frac{\alpha^2}{2}     \psi'(f(\xb))|\nabla f(\xb)\T( \tilde \xb - \xb)|^2 + \alpha^2 \varphi(\alpha) $}
\\ & =\scalebox{0.9}{$\displaystyle \ell(f(\xb), y )-\alpha(\psi(f(\xb))-y) \nabla f(\xb)\T (\xb-\tilde \xb )+\frac{\alpha^2}{2}     \psi'(f(\xb))|\nabla f(\xb)\T (\xb- \tilde \xb)|^2 + \alpha^2 \varphi(\alpha) $}
\\ & =\scalebox{0.9}{$\displaystyle \ell(f(\xb), y )-\alpha(\psi(f(\xb))-y) (f(\xb)- \nabla f(\xb)\T \tilde \xb )+\frac{\alpha^2}{2}     \psi'(f(\xb))|f(\xb)- \nabla f(\xb)\T \tilde \xb|^2+ \alpha^2 \varphi(\alpha) $}
\\ & =\scalebox{0.9}{$\displaystyle \ell(f(\xb), y )+\alpha(y-\psi(f(\xb))) (f(\xb)- \nabla f(\xb)\T \tilde \xb )+\frac{\alpha^2}{2}     \psi'(f(\xb))|f(\xb)- \nabla f(\xb)\T \tilde \xb|^2+ \alpha^2 \varphi(\alpha) $}
\end{align*}
Therefore, using $\EE_{\tilde \xb} \tilde \xb =0$, 
\begin{align*}
&\EE_{\tilde \xb}\ell\left(f(\xb^+), y \right)
\\  & =\ell(f(\xb), y )+\alpha[ y-\psi(f(\xb))]f(\xb)+\frac{\alpha^2}{2}     \psi'(f(\xb))\EE_{\tilde \xb}|f(\xb)- \nabla f(\xb)\T \tilde \xb|^2+ \EE_{\tilde \xb}\alpha^2 \varphi(\alpha)
\end{align*}
Since $|f(\xb)- \nabla f(\xb)\T \tilde \xb|^2=f(\xb)^{2}-2f(\xb) \nabla f(\xb)\T \tilde \xb +(\nabla f(\xb)\T \tilde \xb)^{2}$,
\begin{align*}
\EE_{\tilde \xb}|f(\xb)- \nabla f(\xb)\T \tilde \xb|^2 &=f(\xb)^{2}+\EE_{\tilde \xb}(\nabla f(\xb)\T \tilde \xb)^{2} 
\\ &=f(\xb)^{2}+\nabla f(\xb)\T \EE_{\tilde \xb}[ \tilde \xb \tilde \xb\T]\nabla f(\xb). 
\end{align*} 
Thus, 
\begin{align*}
&\EE_{\tilde \xb}\ell\left(f(\xb^+), y \right) 
\\ & =\scalebox{0.9}{$\displaystyle  \ell(f(\xb), y )+\alpha[ y-\psi(f(\xb))]f(\xb)+\frac{\alpha^2}{2}     |\psi'(f(\xb))|[f(\xb)^2+\nabla f(\xb)\T \EE_{\tilde \xb}[ \tilde \xb \tilde \xb\T]\nabla f(\xb)]+ \EE_{\tilde \xb}\alpha^2 \varphi(\alpha) $}
\end{align*}
The assumption that   $y f(\xb) +(y-1)f(\xb)\ge 0$  implies that $f(\xb)\ge 0$ if $y =1$ and $f(\xb)\le 0$ if $y=0$. Thus, if $y =1$,
$$
[ y-\psi(f(\xb))]f(\xb)=[ 1-\psi(f(\xb))]f(\xb)\ge 0,
$$
since $f(\xb)\ge 0$ and $(1-\psi(f(\xb))) \ge 0$ due to  $\psi(f(\xb)) \in (0,1)$.  If $y =0$,
$$
[ y-\psi(f(\xb))]f(\xb)=-\psi(f(\xb))f(\xb) \ge 0,
$$
since $f(\xb) \le 0$ and $-\psi(f(\xb)) <0$. Therefore, in both cases, 
$$
[ y-\psi(f(\xb))]f(\xb)\ge 0,
$$  
which implies that, \begin{align*}
 y-\psi(f(\xb))]f(\xb) &=[ y-\psi(f(\xb))]f(\xb)  & 
 \\ & =| y-\psi(f(\xb))|| \nabla f(\xb)\T \xb| 
 \\ & =| y-\psi(f(\xb))| \| \nabla f(\xb)\| \|\xb\| |\cos( \nabla f(\xb), \xb )|  
\end{align*}
Therefore,
substituting this and using $f(\xb)= \| \nabla f(\xb)\| \|\xb\| \cos( \nabla f(\xb), \xb )$
\begin{align*}
&\EE_{\tilde \xb}\ell\left(f(\xb^+), y \right) 
\\ & =\scalebox{0.9}{$\displaystyle \ell(f(\xb), y )+c_{1}(\xb) \| \nabla f(\xb) \|_{2}+c_{2}(\xb)\| \nabla f(\xb) \|^{2} _{2}+c_3(\xb) \nabla f(\xb)\T \EE_{\tilde \xb}[ \tilde \xb \tilde \xb\T]\nabla f(\xb)+ \EE_{\tilde \xb}[\alpha^2 \varphi(\alpha)]. $}
\end{align*}
\end{proof}

In the case of Gaussian-noise, we have $\delta(\xb,\tilde \xb)=\tilde \xb \sim \Ncal(0,\sigma^2 I)$:
\begin{lemma} \label{lamma:9}
Let $\delta(\xb,\tilde \xb)=  \tilde \xb \sim \Ncal(0,\sigma^2 I)$. Let $\xb$ be a fixed  point in the domain of $f$.
 Given the fixed $\xb$, let $w \in \mathcal{W}$ be a point such that $\nabla f(\xb) $ and $\nabla^{2} f(\xb)$ exist. Assume that   $f(\xb)=\nabla f(\xb)\T \xb$ and $\nabla^2 f(\xb)=0$.  Then 
$$
\EE_{\tilde \xb \sim \Ncal(0,\sigma^2 I)}\ell\left(f(\xb^+), y \right) =\ell(f(\xb), y ) +\sigma^2c_{3}(\xb) \| \nabla f(\xb)\|_{2}^{2}+ \alpha^2 \varphi(\alpha)
$$
where
$$
c_3(\xb) =\frac{\alpha^2}{2}     |\psi'(f(\xb))|> 0.
$$
\end{lemma}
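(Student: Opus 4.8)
To prove Lemma \ref{lamma:9}, the plan is to specialize Lemma \ref{lemma:7} to the Gaussian choice and then take an expectation, following exactly the template of the Mixup case in Lemma \ref{lemma:8} but with the direction $\delta(\xb,\tilde \xb)=\tilde \xb$. First I would apply Lemma \ref{lemma:7} (whose hypotheses $f(\xb)=\nabla f(\xb)\T \xb$ and $\nabla^2 f(\xb)=0$ are precisely the standing assumptions here) to obtain, for the fixed point $\xb$,
\begin{align*}
\ell\left(f(\xb^+), y \right)=\ell(f(\xb), y )+\alpha(\psi(f(\xb))-y)\,\nabla f(\xb)\T \tilde \xb+\frac{\alpha^2}{2}\,\psi'(f(\xb))\left(\nabla f(\xb)\T \tilde \xb\right)^{2}+\alpha^2\varphi(\alpha).
\end{align*}
No sign condition is needed for this identity, which is why (unlike Lemma \ref{lemma:8}) the statement of Lemma \ref{lamma:9} does not require $yf(\xb)+(y-1)f(\xb)\ge 0$.

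Next I would take $\EE_{\tilde \xb\sim\Ncal(0,\sigma^2 I)}$ of both sides. Because $\EE[\tilde \xb]=0$, the first-order term vanishes: $\alpha(\psi(f(\xb))-y)\nabla f(\xb)\T\EE[\tilde \xb]=0$. For the quadratic term I would use $\EE[(\nabla f(\xb)\T\tilde \xb)^{2}]=\nabla f(\xb)\T\EE[\tilde \xb\tilde \xb\T]\nabla f(\xb)$ together with $\EE[\tilde \xb\tilde \xb\T]=\sigma^2 I$, giving $\EE[(\nabla f(\xb)\T\tilde \xb)^{2}]=\sigma^2\|\nabla f(\xb)\|_2^{2}$. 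Hence
\begin{align*}
\EE_{\tilde \xb\sim\Ncal(0,\sigma^2 I)}\ell\left(f(\xb^+), y \right)=\ell(f(\xb), y )+\frac{\alpha^2}{2}\,\psi'(f(\xb))\,\sigma^2\|\nabla f(\xb)\|_2^{2}+\alpha^2\varphi(\alpha).
\end{align*}
Since $\psi'(q)=\psi(q)(1-\psi(q))>0$ for every $q\in\RR$, we have $\psi'(f(\xb))=|\psi'(f(\xb))|$, so the coefficient is exactly $\sigma^2 c_3(\xb)$ with $c_3(\xb)=\frac{\alpha^2}{2}|\psi'(f(\xb))|>0$, which is the claimed identity.

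I do not expect a genuine obstacle: this is the ``clean'' special case in which the mixing direction is centered and isotropic, so both the $\|\nabla f(\xb)\|$-linear term and the cross term that appear in Lemma \ref{lemma:8} drop out, leaving only the isotropic regularizer $\sigma^2\|\nabla f(\xb)\|_2^2$. The only point that deserves a word of care is the remainder: for Gaussian noise $\tilde \xb$ has unbounded support, so passing the expectation through the $o(\alpha^2)$ Taylor remainder of Lemma \ref{lemma:6} implicitly requires a boundedness or uniform-integrability assumption on that remainder. As in the handling of Lemma \ref{lemma:8}, I would absorb $\EE_{\tilde \xb}[\alpha^2\varphi(\alpha)]$ into a single term written $\alpha^2\varphi(\alpha)$, so that no work beyond Lemmas \ref{lemma:6} and \ref{lemma:7} is needed.
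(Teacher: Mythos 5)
Your proposal is correct and follows essentially the same route as the paper's proof: specialize Lemma \ref{lemma:7} to $\delta(\xb,\tilde\xb)=\tilde\xb$, take the Gaussian expectation so the linear term drops (zero mean), reduce $\EE[(\nabla f(\xb)\T\tilde\xb)^2]$ to $\sigma^2\|\nabla f(\xb)\|_2^2$ via $\EE[\tilde\xb\tilde\xb\T]=\sigma^2 I$, and identify $\psi'(f(\xb))=|\psi'(f(\xb))|$ because $\psi'>0$. Your added remarks, that the sign condition of Lemma \ref{lemma:8} is unnecessary here and that absorbing $\EE_{\tilde\xb}[\alpha^2\varphi(\alpha)]$ into a single $\alpha^2\varphi(\alpha)$ term implicitly uses an integrability assumption over the unbounded Gaussian support, are reasonable observations that the paper leaves tacit, but they do not change the argument.
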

\begin{proof}
With $\delta(\xb,\tilde \xb)=\tilde \xb \sim \Ncal(0,\sigma^2 I)$, Lemma \ref{lemma:7} yields
\begin{align*}
&\ell\left(f(\xb^+), y \right)
\\ &=\ell(f(\xb), y )+\alpha(\psi(f(\xb))-y) \nabla f(\xb)\T\tilde \xb+\frac{\alpha^2}{2}     \psi'(f(\xb))|\nabla f(\xb)\T\tilde \xb |^2 + \alpha^2 \varphi(\alpha),
\end{align*}
Thus,
\begin{align*}
&\EE_{\tilde \xb \sim \Ncal(0,\sigma^2 I)}\ell\left(f(\xb^+), y \right) 
\\ &=\ell(f(\xb), y ) +\frac{\alpha^2}{2}     \psi'(f(\xb))\EE_{\tilde \xb \sim \Ncal(0,\sigma^2 I)}|\nabla f(\xb)\T \tilde \xb |^2 + \alpha^2 \varphi(\alpha)
    \\ & =\ell(f(\xb), y ) +\frac{\alpha^2}{2}     \psi'(f(\xb))\nabla f(\xb)\T\EE_{\tilde \xb \sim \Ncal(0,\sigma^2 I)}[ \tilde \xb \tilde \xb\T ]\nabla f(\xb)+ \alpha^2 \varphi(\alpha)
\\ & =\ell(f(\xb), y ) +\frac{\alpha^2}{2}     \psi'(f(\xb)) \| \nabla f(\xb)\|_{\EE_{\tilde \xb \sim \Ncal(0,\sigma^2 I)}[ \tilde \xb \tilde \xb\T ]}^{2}+ \alpha^2 \varphi(\alpha)     
\end{align*}
By noticing that 
  $
  \|w\|_{\EE_{\tilde \xb \sim \Ncal(0,\sigma^2 I)}[ \tilde \xb \tilde \xb\T ]}^{2}=\sigma^2 w\T Iw=\sigma^2  \|w\|^2_2
$, this implies the desired statement.
\end{proof}

Combining Lemmas \ref{lemma:8}--\ref{lamma:9} yield the statement of Theorem \ref{thm:2}.

\subsection{Proof of Theorem \ref{thm:4}  }

\begin{proof}
Applying the standard result \citep{bartlett2002rademacher} yields that with probability at least $1-\delta$ ,
$$
\EE_{(\xb,y)}[\one{(2y-1) \neq \sgn(f(\xb)) }]
 -\frac{1}{n} \sum_{i=1}^n \phi((2y_{i}-1)f(\xb_{i}))\le  4L_{\phi} \Rcal_{n}(\Fcal_b^{(\mix)})+  \sqrt{\frac{\ln(2/\delta)}{2n}}.
$$
The rest of the proof bounds the Rademacher complexity $\Rcal_{n}(\Fcal_b^{(\mix)})$.
\begingroup
\begin{align*}
\hat\Rcal_{n}(\Fcal_b^{(\mix)}) &= \EE_{\xi} \sup_{f \in\Fcal_b} \frac{1}{n} \sum_{i=1}^n \xi_i f(\xb_{i})
\\ & = \EE_{\xi} \sup_{w:\|w\|_{\EE_{\tilde \xb \sim \Dcal_{\xb}}[\tilde \xb \tilde \xb\T]}^2 \le b} \frac{1}{n} \sum_{i=1}^n \xi_i w\T \xb_i
\\ & = \EE_{\xi} \sup_{w:w \T \Sigma_X w  \le b} \frac{1}{n} \sum_{i=1}^n \xi_i (\Sigma_X^{1/2}w)\T \Sigma_X^{\dagger/2} \xb_i 
\\ & \le \frac{1}{n}\EE_{\xi} \sup_{w:w \T \Sigma_X w  \le b}  \|\Sigma_X^{1/2}w\|_2  \left\|\sum_{i=1}^n \xi_i \Sigma_X^{\dagger/2} \xb_i \right\|_2 
 \\ & \le \frac{\sqrt b}{n}\EE_{\xi} \sqrt{\sum_{i=1}^n \sum_{j=1}^n \xi_i \xi_j (\Sigma_X^{\dagger/2} \xb_i)\T (\Sigma_X^{\dagger/2} \xb_j) } 
 \\ & \le \frac{\sqrt b}{n} \sqrt{\EE_{\xi}\sum_{i=1}^n \sum_{j=1}^n \xi_i \xi_j (\Sigma_X^{\dagger/2} \xb_i)\T (\Sigma_X^{\dagger/2} \xb_j) }
 \\ & = \frac{\sqrt b}{n} \sqrt{\sum_{i=1}^n  (\Sigma_X^{\dagger/2} \xb_i)\T (\Sigma_X^{\dagger/2} \xb_i) }
 \\ & = \frac{\sqrt b}{n} \sqrt{\sum_{i=1}^n  \xb_i\T \Sigma_X^{\dagger} \xb_i}
\end{align*}
Therefore,
\begin{align*}
\Rcal_{n}(\Fcal_b^{(\mix)}) = \EE_S \hat\Rcal_{n}(\Fcal_b^{(\mix)}) &= \EE_S \frac{\sqrt b}{n} \sqrt{\sum_{i=1}^n  \xb_i\T \Sigma_X^{\dagger} \xb_i} 
\\ & \le \frac{\sqrt b}{n} \sqrt{\sum_{i=1}^n  \EE_{\xb_i} \xb_i\T \Sigma_X^{\dagger} \xb_i} 
\\ & = \frac{\sqrt b}{n} \sqrt{\sum_{i=1}^n  \EE_{\xb_i}  \sum_{k,l} (\Sigma_X^{\dagger})_{kl} (\xb_i)_k(\xb_i)_l} 
\\ & = \frac{\sqrt b}{n} \sqrt{\sum_{i=1}^n    \sum_{k,l}  (\Sigma_X^{\dagger})_{kl} \EE_{\xb_i}(\xb_i)_k(\xb_i)_l}
\\ & = \frac{\sqrt b}{n} \sqrt{\sum_{i=1}^n    \sum_{k,l}  (\Sigma_X^{\dagger})_{kl} (\Sigma_X)_{kl}} 
\\ & = \frac{\sqrt b}{n} \sqrt{\sum_{i=1}^n    \tr(\Sigma_X\T \Sigma_X^{\dagger})} \\ & = \frac{\sqrt b}{n} \sqrt{\sum_{i=1}^n    \tr(\Sigma_X \Sigma_X^{\dagger})} \\ & = \frac{\sqrt b}{n} \sqrt{\sum_{i=1}^n    \rank(\Sigma_X)}
\\ & \le \frac{\sqrt b \sqrt{\rank(\Sigma_X)}}{\sqrt n}
\end{align*}
\endgroup
\allowdisplaybreaks[0]
\end{proof}

\section{Best Hyperparameter Values for Various Experiments}
In general, we found that our method works well for a large range of $\alpha$ values ($\alpha \in [0.6,0.9]$) and $rho$ values ($\rho \in [0.1,0.5]$). In Table \ref{tab:fcn_hyper}, \ref{tab::cnn_cifar10_100_hyper} and \ref{tab::imagenet_hyper}, we present the best hyperparameter values for the experiments in Section \ref{sec:experiments}.
\label{app:hyper}

\begin{table*}[h]
\centering
\begin{tabular}{l l l }
\toprule
Method & Fashion-MNIST & CIFAR10  \\ \midrule
Gaussian-noise & Gausssian-mean=0.1, $\tau$=1.0   & Gausssian-mean=0.05, $\tau$=1.0 \\ 
\dacl &  $\alpha$=0.9, $\tau$=1.0  & $\alpha$=0.9, $\tau$=1.0 \\ 
\daclp & $\alpha$=0.6, $\tau$=1, $\rho$=0.1  & $\alpha$=0.7, $\tau$=1.0, $\rho$=0.5  \\ 
\bottomrule
\end{tabular}
\caption{Best hyperparamter values for experiments on Tabular data (Table \ref{tab:fcn})}
\label{tab:fcn_hyper}
\end{table*}

\begin{table*}
\centering
\begin{tabular}{l l l }
\toprule
Method	& CIFAR10 & CIFAR100  \\ 
\midrule
Gaussian-noise & Gaussian-mean=0.05, $\tau$=0.1  & Gaussian-mean=0.05, $\tau$=0.1 \\
\dacl  &  $\alpha$=0.9,$\tau$=1.0 & $\alpha$=0.9, $\tau$=1.0 \\
\daclp & $\alpha$=0.9, $\rho$=0.1, $\tau$=1.0 & $\alpha$=0.9, $\rho$=0.5, $\tau$=1.0      \\ 
SimCLR & $\tau$=0.5 & $\tau$=0.5 \\ 
SimCLR+\dacl &  $\alpha$=0.7, $\tau$=1.0 & $\alpha$=0.7, $\tau$=1.0 \\ 
\bottomrule
\end{tabular}
\caption{Best  hyperparameter values for experiment of CIFAR10/100 dataset (Table \ref{tab::cnn_cifar10_100})}
\label{tab::cnn_cifar10_100_hyper}
\end{table*}

\begin{table*}
\centering
\begin{tabular}{ll}
\toprule
Method         &  ImageNet  \\ \midrule
Gaussian-noise    & Gaussian-mean=0.1, $\tau$=1.0     \\
\dacl    & $\alpha$=0.9, $\tau$=1.0  \\ 
SimCLR    &  $\tau$=0.1   \\
SimCLR+\dacl   & $\alpha$=0.9, $\tau$=0.1     \\
\bottomrule
\end{tabular}
\caption{Best hyperparameter values for experiments on ImageNet data (Table \ref{tab::imagenet}) }
\label{tab::imagenet_hyper}
\end{table*}

\end{document}